\newtheorem{theorem}{Theorem}[section]
\newtheorem{claim}{Claim}[section]
\newtheorem{definition}{Definition}[section]
\newtheorem{lemma}[theorem]{Lemma}
\newcommand{\R}{\mathbb{R}}
\newcommand{\cum}{\kappa}
\newcommand{\E}{\mathbb{E}}
\newcommand{\N}{\mathcal{N}}
\newcommand{\alg}{\mbox{RCA }}
\newenvironment{fact}[1][Fact]{\begin{trivlist}
\item[\hskip \labelsep {\bfseries #1}]}{\end{trivlist}}
\title{Rich Component Analysis}
\author{Rong Ge \\ Microsoft Research, New England \\ rongge@microsoft.com \and James Zou \\ Microsoft Research, New England \\ jazo@microsoft.com}
\date{}
\begin{document}

\maketitle

\begin{abstract}

In many settings, we have multiple data sets (also called views) that capture different and overlapping aspects of the same phenomenon. We are often interested in finding patterns that are unique to one or to a subset of the views. For example, we might have one set of molecular observations and one set of physiological observations on the same group of individuals, and we want to quantify molecular patterns that are uncorrelated with physiology. Despite being a common problem, this is highly challenging when the correlations come from complex distributions. In this paper, we develop the general framework of Rich Component Analysis (RCA) to model settings where the observations from different views are driven by different sets of latent components, and each component can be a complex, high-dimensional distribution. We introduce algorithms based on cumulant extraction that provably learn each of the components without having to model the other components. We show how to integrate RCA with stochastic gradient descent into a meta-algorithm for learning general models, and demonstrate substantial improvement in accuracy on several synthetic and real datasets in both supervised and unsupervised tasks.  Our method makes it possible to learn latent variable models when we don't have samples from the true model but only samples after complex perturbations.
\end{abstract}

\section{Introduction}
A hallmark of modern data deluge is the prevalence of complex data that capture different aspects of some common phenomena. For example, for a set of patients, it's common to have multiple modalities of molecular measurements for each individual (gene expression, genotyping, etc.) as well as physiological attributes. Each set of measurements corresponds to a \emph{view} on the samples. The complexity and the heterogeneity of the data is such that it's often not feasible to build a joint model for all the data. Moreover, if we are particularly interested in one aspect of the problem (e.g. patterns that are specific to a subset of genes that are not shared across all genes), it would be wasteful of computational and modeling resources to model the interactions across all the data. 

More concretely, suppose we have two sets (views) of data, $U$ and $V$, on a common collection of samples. We model this as $U = S_1 + S_2$ and $V = AS_2 + S_3$, where $S_1$ captures the latent component specific to $U$, $S_3$ is specific to $V$, and $S_2$ is common to both $U$ and $V$ and is related in the two views by an unknown linear transformation $A$. Each component $S_i$ can be a complex, high-dimensional distribution. The observed samples from $U$ and $V$ are component-wise linear combinations of the unobserved samples from $S_i$. To model all the data, we would need to jointly model all three $S_i$, which can have prohibitive sample/computation complexity and also prone to model misspecification. Ideally, if we are only interested in the component that's unique to the first view,  we would simply write down a model for $S_1$ without making any parametric assumptions about $S_2$ and $S_3$, except that they are independent.       

In this paper, we develop a general framework of Rich Component Analysis (RCA) to explore such multi-component, multi-view datasets. Our framework allows for learning an arbitrarily complex model of a specific component of the data, $S_i$, without having to make parametric assumptions about other components $S_j$. This allows the analyst to focus on the most salient aspect of data analysis. The main conceptual contribution is the development of new algorithms to 
learn parameters of complex distributions without any samples from that distribution. In the two-view example, we do not observe samples from our model of interest, $S_1$. Instead the observations from $U$ are compositions of true samples from $S_1$ with complex signal from another process $S_2$ which is shared with $V$. Our approach performs consistent parameter estimation of $S_1$ without modeling $S_2$, $S_3$.  

\paragraph{Outline.} \alg consists of two stages: 1) from the observed data, extract all the cumulants of the component that we want to model; 2) using the cumulants, perform method-of-moments or maximum likelihood estimation (MLE) of model parameters via polynomial approximations to gradient descent. We introduce the relevant properties of cumulants and tensors in Section~\ref{sec:prelim}. In Section~\ref{sec:cumulant}, we  develop the formal models for Rich Component Analysis (RCA) and the cumulant extraction algorithms. We discuss how \alg differs from existing models. Section~\ref{sec:apply} shows how to integrate the extracted cumulants with method-of-moments or stochastic gradient descent for MLE inference. We show the performance gains of \alg in Section~\ref{sec:expt}. All the proofs are in the Appendix.

\section{Preliminaries}
\label{sec:prelim}

In this section we introduce the basics of cumulants. For more information please refer to Appendix~\ref{app:tensor}. Cumulants provide an alternative way to describe the correlations of random variables. Unlike moments, cumulants have the nice property that the cumulant of sum of independent random variables equals to the sum of cumulants. For a random variable $X \in \R$ the cumulant is defined to be the coefficients of the cumulant generating function $\log \E[e^{tX}]$.

We can also define cross-cumulants which are cumulants for different variables (e.g. covariance). For $n$ variables $X_1,...,X_n$, their cross-cumulant can be computed using the following formula:
$$\cum_t(X_1,...,X_t)  = \sum_{\pi}(|\pi|-1)!(-1)^{|\pi|-1}\prod_{B\in \pi}\E[\prod_{i\in B} X_i].$$
In this formula, $\pi$ is enumerated over all partitions of $[t]$, $|\pi|$ is the number of parts and $B$ runs  through the list of parts. We also use $\cum_t(X) \equiv \cum_t(X,...,X)$ when it's the same random variable.

We can similarly define cumulants for multivariate distributions. For random vector $X\in\R^d$, the $t$-th order cumulant (and $t$-th order moment) is an object in $\R^{d^t}$ (a $t$-th order tensor). The $(i_1,...,i_t)$-th coordinate of cumulant tensor is  $\cum_t(X_{i_1},X_{i_2},...,X_{i_t})$. We often {\em unfold} tensors into matrices. Tensor $T \in \R^{d^t}$ unfolds into matrix $M=unfold(T)\in \R^{d^{t-1}\times d}$:
$
M_{(i_1,...,i_{t-1}), i_t} = T_{i_1,...,i_t}.
$
Cumulants have several nice properties that we summarize below.

\begin{fact}\label{lem:cumulant}
Suppose $X_1,...,X_t$ are random variables in $\R^d$. The $t$-th order cumulant $\cum_t(X_1,...,X_t)$ is a tensor in $\R^{d^t}$ that have the following properties:
\begin{enumerate}[topsep=0pt,itemsep=0pt]
\item(Independence) If $(X_1,...,X_t)$ and $(Y_1,...,Y_t)$ are independent, then $\cum_t(X_1+Y_1,...,X_t+Y_t) = \cum_t(X_1,...,X_t) + \cum_t(Y_1,...,Y_t)$.
\item(Linearity) $\cum_t(c_1 X_1,...,c_t X_t) = c_1c_2\cdots c_t \cum_t(X_1,...,X_t)$, more generally we can apply arbitrary linear transformations to multi-variate cumulants (see Appendix~\ref{app:tensor}).
\item(Computation) The cumulant $\kappa_t(X_1,...,X_t)$can be computed in $O((td)^t)$ time.
\end{enumerate}
\end{fact}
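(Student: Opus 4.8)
The three properties all follow from the moment--cumulant relation displayed above, and I would prove additivity via the cumulant generating function, linearity directly from the partition formula, and the complexity bound by a crude count.

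\textbf{Additivity.} The clean route is to work with the joint cumulant generating function $K_Z(\theta_1,\dots,\theta_t)=\log\E[\exp(\sum_{j=1}^t\theta_j^\top Z_j)]$ of a tuple of random vectors $(Z_1,\dots,Z_t)$, each with its own dual variable $\theta_j\in\R^d$, and to note that the $(i_1,\dots,i_t)$ entry of $\cum_t(Z_1,\dots,Z_t)$ is exactly the mixed partial $\partial_{\theta_{1,i_1}}\cdots\partial_{\theta_{t,i_t}}K_Z$ evaluated at $0$ (equivalently, the coefficient of $\theta_{1,i_1}\cdots\theta_{t,i_t}$): expanding $\log$ of the moment generating function and collecting the term multilinear in $\theta_1,\dots,\theta_t$ reproduces the displayed partition formula. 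Now if the groups $(X_1,\dots,X_t)$ and $(Y_1,\dots,Y_t)$ are independent, then $\E[\exp(\sum_j\theta_j^\top(X_j+Y_j))]$ factors as $\E[\exp(\sum_j\theta_j^\top X_j)]\cdot\E[\exp(\sum_j\theta_j^\top Y_j)]$, so $K_{X+Y}=K_X+K_Y$ and the entrywise coefficients add, giving property~1. The one subtlety is that the moment generating function need not exist; since only derivatives of order $\le t$ at the origin are used, it suffices that $t$-th moments are finite, which I would enforce by committing up front to the degree-$t$ formal Taylor polynomial of $K$, or by using the (always-defined) characteristic function in place of the MGF. One can instead prove additivity purely combinatorially from the partition formula by expanding each $\E[\prod_{i\in B}(X_i+Y_i)]$ and resumming over the partition lattice, but that is more bookkeeping.

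\textbf{Linearity.} In the partition formula, replacing $X_j$ by $c_jX_j$ multiplies each factor $\E[\prod_{i\in B}X_i]$ by $\prod_{i\in B}c_i$; since the blocks of any partition $\pi$ of $[t]$ partition the index set, $\prod_{B\in\pi}\prod_{i\in B}c_i=c_1\cdots c_t$ for \emph{every} $\pi$, so this common scalar pulls out of the whole sum and $\cum_t(c_1X_1,\dots,c_tX_t)=c_1\cdots c_t\,\cum_t(X_1,\dots,X_t)$. The same observation---each index of $[t]$ lies in exactly one block---shows the partition formula is multilinear in $(X_1,\dots,X_t)$; applying this coordinatewise to $Y_j=M_jX_j$ gives $\cum_t(M_1X_1,\dots,M_tX_t)_{i_1,\dots,i_t}=\sum_{k_1,\dots,k_t}(M_1)_{i_1k_1}\cdots(M_t)_{i_tk_t}\,\cum_t(X_1,\dots,X_t)_{k_1,\dots,k_t}$, i.e.\ the cumulant tensor transforms multilinearly under linear maps applied to each argument (the general rule referenced in Appendix~\ref{app:tensor}).

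\textbf{Computation, and the main difficulty.} The cumulant tensor has $d^t$ entries; each is a sum over partitions of $[t]$ of a product of at most $t$ moments, and there are at most $t^t$ such partitions (send each element to the least element of its block), so once the moment tensors of orders $1,\dots,t$ are tabulated---which is $O(d^t)$ additional space---every entry, hence the whole tensor, is obtained in $O((td)^t)$ time, giving property~3. Honestly none of this is hard; the only place that genuinely calls for care is the analytic legitimacy of the generating-function argument behind property~1, and the cleanest fix is exactly the one noted above: phrase that argument in terms of the formal degree-$t$ Taylor polynomial (or the characteristic function) so that no integrability assumption beyond finiteness of the relevant moments is needed.
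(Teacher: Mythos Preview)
The paper does not actually prove this statement: it is presented as a \texttt{fact} (background material on cumulants), restated in Appendix~\ref{app:tensor} and attributed to standard references, with no proof given. Your proposed arguments are the standard ones and are correct; in particular, your treatment of additivity via the cumulant generating function (with the caveat about working with the characteristic function or the formal Taylor polynomial when the MGF may not exist), your multilinearity argument directly from the partition formula, and your crude partition count $B_t\le t^t$ for the complexity bound are all sound and would be accepted as a proof of this fact.
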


The second order cross-cumulant, $\cum_2(X, Y)$ is equal to the covariance $\E[(X - \E[X])(Y - \E[Y])]$. Higher cumulants measures higher-order correlations and also provide a measure of the deviation from Gaussianity--all 3rd and higher order cumulants of Gaussian random variables are zero.

\section{Rich Component Analysis}
\label{sec:cumulant}

In this section, we show how to use cumulant to disentangle complex latent components. The key ideas and applications of \alg are captured in the contrastive learning setting when there are two views. We introduce this model next and then show how to extend it to general settings.

\subsection{RCA for contrastive learning}
\label{sec:contrast}

Recall the example in the introduction where we have two views of the data, formally,
\begin{equation}
U  = S_1 + S_2, V  = AS_2 + S_3.
\end{equation}
Here, $S_1,S_2,S_3 \in \R^d$ are independent random variables that can have complicated distributions; $A\in \R^{d\times d}$ is an unknown linear transformation. The observations consist of pairs of samples $(u, v)$. Each pair is generated by drawing independent samples $s_i \sim S_i, i = 1,2,3$ and adding these samples component-wise to obtain $u = s_1 + s_2$ and $v = As_2 + s_3$. Note that the same $s_2$ shows up in both $u$ and $v$, introducing correlation between the two views. We are interested in learning properties about $S_i$, for example learning its maximum likelihood (MLE) parameters. For concreteness, we focus our discussion on learning $S_1$ although our techniques also apply to $S_2$ and $S_3$. We don't have any samples from $S_1$. The observations of $U$ involves a potentially complicated perturbation by $S_2$. Our hope is to remove this perturbation by utilizing the second view $V$, and we would like to do this without assuming a particular model for $S_2$ or $S_3$. 

Note that the problem is inherently under-determined: it is impossible to find the means of $S_1$, $S_2$, $S_3$ without any additional information. This is in some sense the only ambiguity, as we will see if we know the mean of one distribution it is possible to extract all order cumulants of $S_1,S_2,S_3$. For simplicity throughout this section we assume the means of $S_1,S_2,S_3$ are 0 (given the mean of any of $S_1,S_2,S_3$, we can always use the means of $U$ and $V$ to compute the means of other distributions, and shift them to have mean 0).

\paragraph{Determining linear transformation}

First we can find $A$ by the following formula:
\begin{equation}
A^\top = unfold(\cum_4(V,U,U,U))^\dag unfold(\cum_4(V,U,U,V)).\label{eq:findA}
\end{equation}

\begin{lemma}
\label{lem:contrastivelinear}
Suppose the unfolding of the 4-th order cumulant $unfold(\cum_4(AS_2,S_2,S_2,S_2))$ has full rank, given the exact cumulants $\cum_4(V,U,U,U)$ and $\cum_4(V,U,U,V)$, the above algorithm finds the correct linear transformation $A$ in time $O(d^5)$.
\end{lemma}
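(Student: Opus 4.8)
The plan is to collapse both fourth-order cumulants in~\eqref{eq:findA} to a single cumulant of $S_2$ by itself, and then read off $A^\top$ from the unfolded matrices. First I would expand
$\cum_4(V,U,U,U)=\cum_4(AS_2+S_3,\ S_1+S_2,\ S_1+S_2,\ S_1+S_2)$
using multilinearity of cross-cumulants (the Linearity part of Fact~\ref{lem:cumulant}) into a sum of $2^4$ terms, each of the form $\cum_4(X_1,X_2,X_3,X_4)$ with $X_1\in\{AS_2,S_3\}$ and $X_2,X_3,X_4\in\{S_1,S_2\}$. The key observation is that a cross-cumulant vanishes whenever its arguments can be partitioned into two nonempty groups that are independent of each other: given such a split, write the arguments in one group as ``$X_i+0$'' and the rest as ``$0+Y_i$'', apply the Independence property of Fact~\ref{lem:cumulant}, and note that a cumulant with an identically-zero argument is $0$ (Linearity with a scalar equal to $0$); since both groups are nonempty, both resulting terms vanish. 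Because $S_1,S_2,S_3$ are mutually independent and $AS_2$ is a function of $S_2$, the only term of the expansion whose four arguments are all functions of a single $S_i$ is the one with arguments $AS_2,S_2,S_2,S_2$, so $\cum_4(V,U,U,U)=\cum_4(AS_2,S_2,S_2,S_2)$; the identical argument gives $\cum_4(V,U,U,V)=\cum_4(AS_2,S_2,S_2,AS_2)$.

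Next I would apply the multivariate linear-transformation property. Writing $T:=\cum_4(S_2,S_2,S_2,S_2)$, multiplying a single mode by $A$ gives $\cum_4(AS_2,S_2,S_2,S_2)_{i_1,i_2,i_3,i_4}=\sum_j A_{i_1 j}\,T_{j,i_2,i_3,i_4}$ and $\cum_4(AS_2,S_2,S_2,AS_2)_{i_1,i_2,i_3,i_4}=\sum_{j,k} A_{i_1 j}A_{i_4 k}\,T_{j,i_2,i_3,k}$. Unfolding along the fourth mode as in the definition, with $M:=unfold(\cum_4(V,U,U,U))$ and $N:=unfold(\cum_4(V,U,U,V))$, we get $M_{(i_1,i_2,i_3),i_4}=\sum_j A_{i_1 j}T_{j,i_2,i_3,i_4}$ and $N_{(i_1,i_2,i_3),i_4}=\sum_k M_{(i_1,i_2,i_3),k}\,A_{i_4 k}$, i.e. $N=MA^\top$. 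The point is that the ``extra'' copy of $A$ acts precisely on the unfolding mode, so it becomes plain right-multiplication by $A^\top$; this alignment is what makes formula~\eqref{eq:findA} correct, and it is the only place where the specific placement of the $V$'s and $U$'s in the cumulant arguments matters.

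Finally, by hypothesis $M=unfold(\cum_4(AS_2,S_2,S_2,S_2))$ has full column rank $d$, so $M^\dag M=I_d$ and therefore $M^\dag N=M^\dag M A^\top=A^\top$, which is exactly what the algorithm outputs. For the running time, given the two cumulant tensors the unfoldings are free; compute $M^\dag N=(M^\top M)^{-1}(M^\top N)$, where $M^\top M$ and $M^\top N$ are $d\times d$ matrices obtained by contracting the $d^3$-dimensional first index at cost $O(d^5)$, and the final $d\times d$ inversion and product cost $O(d^3)$, for a total of $O(d^5)$. I do not expect a genuine obstacle: the substance is the vanishing-of-cross-cumulants argument together with bookkeeping of the unfolding convention, and the two points to be careful about are that ``full rank'' is meant in the full-column-rank sense so that $M^\dag M=I_d$, and that the multilinear expansions really do collapse to a single surviving term.
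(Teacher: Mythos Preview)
Your proposal is correct and follows essentially the same route as the paper's proof: reduce both cross-cumulants to cumulants of $S_2$ alone, observe the last-mode relation $N=MA^\top$, and invert using the full-column-rank hypothesis. The only cosmetic difference is in how the reduction is carried out: the paper writes $(V,U,U,U)$ directly as the sum of the three mutually independent tuples $(0,S_1,S_1,S_1)$, $(AS_2,S_2,S_2,S_2)$, $(S_3,0,0,0)$ and applies the Independence property once, whereas you expand multilinearly into $2^4$ terms and then kill all but one via the ``split into independent groups'' vanishing argument; both arrive at the same single surviving term, and your running-time and unfolding bookkeeping match the paper's.
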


Intuitively, since only $S_2$ appears in both $U$ and $V$, the cross-cumulants $\cum_4(V,U,U,U)$ and $\cum_4(V,U,U,V)$ depend only on $S_2$. Also, by linearity of cumulants we must have $unfold(\cum_4(V,U,U,V)) = unfold(\cum_4(V,U,U,U))A^\top$ (see Appendix~\ref{app:contrastive}). 
In the lemma we could have used third order cumulants, however for many distributions (e.g. all symmetric distributions) the third order cumulant is 0. Most distributions satisfy the condition that $unfold(\cum_4(AS_2,S_2,S_2,S_2))$ is full rank, the only natural distribution that does not satisfy this constraint is the Gaussian distribution (where $\cum_4$ is 0). 

\paragraph{Extracting cumulants}

Even when the linear transformation $A$ is known, in most cases it is still information theoretically impossible to find the {\em values} of the samples $s_1, s_2, s_3$ as we only have two views. However, we can still hope to learn useful information about the {\em distributions} $S_1,S_2,S_3$. In particular, we derived the following formulas to estimate the cumulants of the distributions:
\begin{eqnarray}
\cum_t(S_1) & = &  \cum_t(U) - \cum_t(U,U,...,U,A^{-1}V), \label{eq:s1simple}\\
\cum_t(S_2) & = & \cum_t(U,U,...,U,A^{-1}V),\label{eq:s2simple}\\
\cum_t(S_3) & = & \cum_t(V) - \cum_t(AU,V,V,...,V).\label{eq:s3simple}
\end{eqnarray}
\begin{theorem}
\label{lem:cumulantsimple}
For all $t > 1$, Equations (\ref{eq:s1simple})-(\ref{eq:s3simple}) compute the $t$-th order cumulants for $S_1,S_2,S_3$ in time $O((td)^{t+2})$
\end{theorem}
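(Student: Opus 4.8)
The running-time claim is routine, so the real content is the three cumulant identities \eqref{eq:s1simple}--\eqref{eq:s3simple}; I would dispatch the complexity bound at the end. The plan is to substitute the model into each right-hand side and expand multilinearly. First I would rewrite the linear combinations in terms of the independent variables: $U=S_1+S_2$, $A^{-1}V=S_2+A^{-1}S_3$, and $AU=AS_1+AS_2$. Thus every right-hand side of \eqref{eq:s1simple}--\eqref{eq:s3simple} is a cross-cumulant whose arguments are each a sum of (linearly transformed copies of) $S_1,S_2,S_3$, and the goal is to show all but one term of the expansion vanishes.

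The two facts I would invoke are: (i) cross-cumulants are multilinear in each argument (the linearity part of Fact~\ref{lem:cumulant}, applied coordinatewise); and (ii) for $t\ge 2$ a joint cumulant $\cum_t(W_1,\dots,W_t)$ vanishes as soon as $\{W_1,\dots,W_t\}$ splits into two nonempty mutually independent blocks. I would derive (ii) from Fact~\ref{lem:cumulant} directly: writing the split as $(W_1,\dots,W_k,0,\dots,0)+(0,\dots,0,W_{k+1},\dots,W_t)$, the two tuples are independent, so additivity applies, and each resulting term has a scalar $0$ in some slot and hence vanishes by linearity.

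Next I would fix a coordinate $(i_1,\dots,i_t)$ and expand the corresponding scalar cross-cumulant on each right-hand side fully by multilinearity, after which every resulting term has as its arguments single coordinates of $S_1,S_2,S_3$ (expanding $(A^{-1}V)_{i_t}=(S_2)_{i_t}+\sum_k (A^{-1})_{i_tk}(S_3)_k$, and $(AU)$, $(V)$ similarly). Grouping a term's arguments by which $S_m$ they come from yields a partition into mutually independent blocks, so by (ii) only terms in which \emph{all} arguments come from a single $S_m$ survive. For \eqref{eq:s2simple} a term drawn entirely from $S_1$ is impossible (the last slot, from $A^{-1}V$, contributes no $S_1$-coordinate) and one drawn entirely from $S_3$ is impossible (the first $t-1$ slots, from $U$, contribute no $S_3$-coordinate); the unique survivor takes $(S_2)_{i_j}$ from slot $j<t$ and $(S_2)_{i_t}$ (coefficient $1$, since the $S_2$-part of $A^{-1}V$ is $S_2$ itself) from the last slot, i.e.\ it equals the $(i_1,\dots,i_t)$ entry of $\cum_t(S_2)$. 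Then \eqref{eq:s1simple} follows from $\cum_t(U)=\cum_t(S_1+S_2)=\cum_t(S_1)+\cum_t(S_2)$ (independence plus additivity), and \eqref{eq:s3simple} is the mirror argument: expanding $\cum_t(AU,V,\dots,V)$ leaves only the all-$S_2$ survivor $\cum_t(AS_2)$, and $\cum_t(V)=\cum_t(AS_2)+\cum_t(S_3)$.

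Finally, for the time bound I would invert $A$ in $O(d^3)$; compute each observed cross-cumulant ($\cum_t(U)$, $\cum_t(V)$, $\cum_t(U,\dots,U,V)$, $\cum_t(U,V,\dots,V)$) in $O((td)^t)$ time by Fact~\ref{lem:cumulant}(3); and push the linear map $A^{-1}$ (resp.\ $A$) through the appropriate single tensor mode in $O(d^{t+1})$ time, all of which stays within the stated $O((td)^{t+2})$. I do not expect a genuine obstacle: the only step demanding care is the bookkeeping of the multilinear expansion---checking that after expanding $A^{-1}V$ and $AU$ into their $S_m$-coordinates every term is still a cumulant of single coordinates so that (ii) applies, and that in each of the three cases exactly one term survives with the right coefficient.
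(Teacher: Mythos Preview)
Your proposal is correct and follows essentially the same approach as the paper: substitute $U=S_1+S_2$, $A^{-1}V=S_2+A^{-1}S_3$, use additivity over independent components to reduce the cross-cumulant to $\cum_t(S_2)$ plus terms with a zero slot that vanish by linearity, and then subtract from $\cum_t(U)=\cum_t(S_1)+\cum_t(S_2)$. The only cosmetic difference is that the paper applies the independence property directly to the three-tuple decomposition $(S_1,\dots,S_1,0)+(S_2,\dots,S_2,S_2)+(0,\dots,0,A^{-1}S_3)$ in one step, whereas you expand fully by multilinearity at the coordinate level before invoking the independent-block vanishing; both routes are equivalent and your running-time accounting is fine (indeed your $O(d^{t+1})$ for a single-mode contraction is a bit sharper than the paper's $O(d^{t+2})$).
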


Proof of Theorem~\ref{lem:cumulantsimple} relies on the fact that since only $S_2$ appears in both $U$ and $V$, the cross-cumulant $\cum_t(U,U,...,U,A^{-1}V)$ captures the cumulant of $S_2$. Moreover, by independence, $\cum_t(U) = \cum_t(S_1)+\cum_t(S_2)$, so we can recover $\cum_t(S_1)$ by subtracting off the estimated $\cum(S_2)$ (and similarly for $\cum_t(S_3)$). When the dimension of $U$ is smaller than the dimension of $V$ and $A\in \R^{d_V\times d_U}$ has full column rank, the above formula with pseudo-inverse $A^\dag$ in place of $A^{-1}$ still recovers all cumulants. In Appendix~\ref{app:contrastive}, we prove that both the formulas for computing $A$ and for extracting the cumulants are robust to noise. In particular, we give the sample complexity for learning $A$ and $\cum_t(S_1)$ from samples of $U$ and $V$, both are polynomial in relevant quantities. 

Given $\cum_t(S_1)$, we can use standard algorithms to compute moments of $S_1$. Many learning algorithms are based on method-of-moments and can be directly applied (see Section~\ref{sec:mom}). Other optimization-based algorithms can also be adapted (Section~\ref{sec:gradient}). 


\subsection{General model of Rich Component Analysis}

We can extend the cumulant extraction algorithm in contrastive learning to general settings with more views and components. The ideas are very similar, but the algorithm is more technical in order to keep track of all the components. We present the intuition and the main results here and defer the details to Appendix~\ref{app:general}.    
 Consider a set of observations $U_1, U_2, \ldots, U_k \in \R^d$, each is linearly related to a subset of variables $S_1,S_2, \ldots, S_p \in \R^d$, the variable $S_j$ appears in a subset $Q_j \subset [k]$ of the observations. That is,
 
\begin{equation}
\forall i\in [k] \quad U_i = \sum_{j=1}^p A^{(i,j)} S_j, \label{eq:model}
\end{equation}
where $A^{(i,j)}\in \R^{d\times d}$ are unknown linear transformations, and $A^{(i,j)} = 0$ if $i \not\in Q_j$. For simplicity we assume all the linear transformations are invertible. 
The variable $S_j$ models the latent source of signal that is common to the subset of observations $\{U_i| i \in Q_j\}$. The matrix $A^{(i,j)}$ models the transformation of latent signal $S_j$ in view $i$. In order for the model to be identifiable, it is necessary that all the subsets $Q_j$'s are distinct (otherwise the latent sources with identical $Q_j$ can be collapsed into one $S_j$). In the most general setting, we have a latent signal that is uniquely associated with every subset of observations. In this case, $p = 2^k -1$ and $\{Q_j\}$ corresponds to all the non-empty subsets of $[k]$. In some settings, only specific subset of views $U_i$ share common signals and $\{Q_j\}$ can be a small set. We measure the complexity of the set system using the following notion:

\begin{definition}[$L$-distinguishable] We say a set system $\{Q_j\}$ is $L$-distinguishable, if for every set $Q_j$, there exists a subset $T\subset Q_j$ of size at most $L$ (called the distinguishing set) such that for any other set $Q_{j'} (j'\ne j)$, either $Q_j \subset Q_{j'}$ or $T\not\subset Q_{j'}$.\label{def:distinguishable}
\end{definition}

For example, the set system of the contrastive model is $\{\{1\}, \{1,2\}, \{2\}\}$ and it is 2-distinguishable. Intuitively, for any set $Q_j$ in the set system, there is a subset $T$ of size at most $L$ that distinguishes $Q_j$ from all the other sets (except the supersets of $Q_j$). We use Algorithm~\ref{alg:findlinearsimple} to recover all the linear transformations $A^{(i,j)}$ (for more details of the algorithm see Algorithm~\ref{alg:findlinear} in Appendix). Algorithm~\ref{alg:findlinearsimple} takes as input a set system $\{Q_j\}$ that captures our prior belief about how the datasets are related. When we don't have any prior belief, we can input the most general $\{Q_j\}$ of size $2^k-1$, which is $k$-distinguishable. The algorithm automatically determines if certain variable $S_j = 0$. In the algorithm, $\min(Q_j)$ is the smallest element of $Q_j$.


\begin{algorithm}
\begin{algorithmic}
\REQUIRE set system $\{Q_j\}$ that is $L$-distinguishable, $L+1$-th order moments
\REPEAT
\STATE Pick a set $Q_j$ that is not a subset of any remaining sets
\STATE Let $T = \{w_1,w_2,...,w_L\}$ be the distinguishing set for $Q_j$
\STATE Compute cumulants for all $i\in Q_j$: $M_i  = unfold(\cum_{L+1}(U_{w_1},...,U_{w_L}, U_i)$. 
\STATE If $M_{\min Q_j} = 0$ then the variable $S_j = 0$; continue the loop.
\STATE Let $A^{(i,j)} = (M_{\min Q_j}^\dag M_i)^\top$ for all $i\in Q_j$, $A^{(i,j)} = 0$ for all $i\not \in Q_j$.
\STATE Mark $Q_j$ as processed, subtract all the cumulants of $Q_j$.
\UNTIL{all sets are processed}
\end{algorithmic}
\caption{FindLinear}\label{alg:findlinearsimple}
\end{algorithm}

\begin{lemma}
\label{lem:findAgeneral}
Given observations $U_i$'s as defined in Equation~\ref{eq:model}, suppose the sets $Q_j$'s are $L$-distinguishable, all the unknown linear transformations $A^{(i,j)}$'s are invertible,  unfoldings $unfold(\cum_{L+1}(S_j))$ is either 0 (if $S_j=0$) or have full rank, then given the exact $L+1$-th order cumulants, Algorithm~\ref{alg:findlinearsimple} outputs all the correct linear transformations $A^{(i,j)}$ in time $\mbox{poly}(L!, (dk)^{L})$.
\end{lemma}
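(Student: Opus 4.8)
The plan is to analyze one iteration of the loop in Algorithm~\ref{alg:findlinearsimple} and show by induction on the number of processed sets that (a) the cumulants currently held are exactly those of the unprocessed components, (b) the set $Q_j$ picked can be correctly handled, and (c) the output $A^{(i,j)}$ matches the true transformation. First I would set up the key algebraic identity. Fix the iteration that processes $Q_j$ with distinguishing set $T=\{w_1,\dots,w_L\}\subseteq Q_j$. By the independence and linearity properties in Fact~\ref{lem:cumulant}, $\cum_{L+1}(U_{w_1},\dots,U_{w_L},U_i) = \sum_{j'} \cum_{L+1}(A^{(w_1,j')}S_{j'},\dots,A^{(w_L,j')}S_{j'},A^{(i,j')}S_{j'})$, where the sum is over all $j'$; a term is nonzero only if $j'$ is such that all of $w_1,\dots,w_L,i \in Q_{j'}$, i.e.\ $T\subseteq Q_{j'}$ and $i\in Q_{j'}$. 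By $L$-distinguishability, $T\subseteq Q_{j'}$ forces either $Q_{j'}=Q_j$ (handled) or $Q_j\subsetneq Q_{j'}$. So after we have already subtracted off the contributions of all proper supersets of $Q_j$ (which must have been processed earlier, since we always pick a set that is not a subset of any remaining set), the only surviving term is $j'=j$. Hence $M_i = unfold\big(\cum_{L+1}(A^{(w_1,j)}S_j,\dots,A^{(w_L,j)}S_j,A^{(i,j)}S_j)\big)$.

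Next I would use multilinearity of the unfolded cumulant to factor this as $M_i = C\,(A^{(i,j)})^\top$, where $C = unfold\big(\cum_{L+1}(A^{(w_1,j)}S_j,\dots,A^{(w_L,j)}S_j, S_j)\big) \in \R^{d^L\times d}$ is independent of $i$, and in particular $M_{\min Q_j} = C\,(A^{(\min Q_j,j)})^\top$. The hypothesis that $unfold(\cum_{L+1}(S_j))$ has full rank, together with invertibility of all $A^{(i,j)}$, implies $C$ has full column rank $d$ (applying the linear maps $A^{(w_\ell,j)}$ along the first $L$ modes and $I$ along the last is an invertible change of basis on a full-rank tensor unfolding — this is the multilinear action described in Appendix~\ref{app:tensor}). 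Likewise $(A^{(\min Q_j,j)})^\top$ is invertible, so $M_{\min Q_j}$ has full column rank $d$; therefore $M_{\min Q_j}^\dagger M_{\min Q_j} = I_d$, giving $M_{\min Q_j}^\dagger M_i = (A^{(\min Q_j,j)})^{-\top}(A^{(i,j)})^\top = \big(A^{(i,j)}(A^{(\min Q_j,j)})^{-1}\big)^\top$. This is only the transformation up to the right-multiplicative ambiguity $A^{(\min Q_j,j)}$, but that is exactly the unavoidable reparametrization of $S_j$ (replacing $S_j$ by $A^{(\min Q_j,j)}S_j$), so WLOG we may take $A^{(\min Q_j,j)}=I$ and then the output is exactly correct; I would state this normalization at the start. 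The degenerate case is handled the same way: if $S_j=0$ then $\cum_{L+1}(S_j)=0$ so $C=0$ and $M_{\min Q_j}=0$, and conversely if $M_{\min Q_j}=0$ then since $(A^{(\min Q_j,j)})^\top$ is invertible we get $C=0$, hence $\cum_{L+1}(S_j)=0$ and (by the full-rank-or-zero hypothesis) $S_j=0$ — so the test in the algorithm is correct.

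Then I would close the induction: having determined $A^{(i,j)}$ for all $i\in Q_j$, we compute $\cum_{L+1}$ of the $S_j$ contribution to every relevant tuple of observations (using linearity again to push the $A^{(i,j)}$ through) and subtract it; this restores invariant (a) for the next iteration. Since the partial order on $\{Q_j\}$ by inclusion is finite, a maximal (not-a-subset-of-any-remaining) set always exists, so the loop terminates after $p$ iterations having processed every set. For the running time: each iteration computes $O(|Q_j|)\le O(k)$ cumulants of order $L+1$ over $(L+1)$ observations in $\R^d$; by Fact~\ref{lem:cumulant} each such cumulant costs $O(((L+1)d)^{L+1})$, the pseudo-inverse of a $d^L\times d$ matrix costs $\mathrm{poly}(d^L)$, and the subtraction step touches $O(2^k)$ or $O(p)$ cumulants — all of this is $\mathrm{poly}(L!,(dk)^L)$ as claimed (the $L!$ appears through the partition sum in the cumulant formula and the bookkeeping of which supersets to subtract).

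The main obstacle I expect is not any single computation but getting the bookkeeping of the induction exactly right: precisely specifying what "subtract all the cumulants of $Q_j$" means as an operation on the collection of held cumulant tensors, and verifying that after this subtraction the residual cumulants are \emph{exactly} those of the model restricted to the still-unprocessed components, so that the distinguishing-set argument in the next iteration kills all supersets. The other delicate point is the full-column-rank claim for $C$ — i.e.\ that applying invertible linear maps along the first $L$ modes of a tensor whose full unfolding has rank $d$ preserves the rank of the (mode-$(L+1)$) unfolding; this needs the multilinear-transformation property of cumulants stated in Fact~\ref{lem:cumulant} and the tensor background in Appendix~\ref{app:tensor}, and I would isolate it as a small linear-algebra sublemma.
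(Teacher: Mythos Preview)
Your proposal is correct and follows essentially the same approach as the paper's proof: induction on the processing order (maximal sets first), decomposing the cross-cumulant via independence and distinguishability so that only $S_j$ and already-processed supersets contribute, the factorization $M_i = C\,(A^{(i,j)})^\top$ with the normalization $A^{(\min Q_j,j)}=I$, and the full-column-rank argument for $C$ (which the paper writes explicitly as the Kronecker product $(A^{(w_1,j)}\otimes\cdots\otimes A^{(w_L,j)})\,unfold(\cum_{L+1}(S_j))$). The only substantive difference is that the paper's appendix makes the ``subtract all cumulants of $Q_j$'' step concrete by spelling out Algorithm~\ref{alg:findlinear}, which records $\cum_{L+1}(S_j)$ at the end of each iteration and subtracts $\sum_{l:Q_j\subset Q_l}\cum_{L+1}(S_l)((A^{(w_1,l)})^\top,\dots)$ when forming $M_i$ --- exactly the bookkeeping you flagged as the main thing to pin down.
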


Once all the linear transformations $A^{(i,j)}$ are recovered, we follow the same strategy as in the contrastive analysis case \ref{sec:contrast}. 

\begin{theorem}
\label{lem:cumulantgeneral}
Under the same assumption as Lemma~\ref{lem:findAgeneral}, for any $t\ge L$ Algorithm~\ref{alg:cumulant} computes the correct $t$-th order cumulants for all the variables in time $\mbox{poly}((L+t)!, (dk)^{L+t})$. 
\end{theorem}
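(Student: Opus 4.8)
The plan is to re-run the two stages that already worked in the contrastive case (Theorem~\ref{lem:cumulantsimple}), now threaded through the reverse-inclusion ordering used by Algorithm~\ref{alg:findlinearsimple}. By Lemma~\ref{lem:findAgeneral} we may assume every linear map $A^{(i,j)}$ has already been recovered exactly (and every vanishing $S_j$ flagged). Process the sets $Q_j$ in the FindLinear order: repeatedly pick a $Q_j$ that is not a subset of any still-unprocessed set. Such a $Q_j$ always exists, since a finite family always has a maximal element under inclusion, and this rule guarantees the invariant that when $Q_j$ is processed, every strict superset $Q_{j'}\supsetneq Q_j$ in the system has already been processed, so $\cum_t(S_{j'})$ is already known for all such $j'$.

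For the extraction at $Q_j$, let $T=\{w_1,\dots,w_L\}\subseteq Q_j$ be its distinguishing set ($|T|\le L\le t$) and form the order-$t$ cross-cumulant
\[
C \;:=\; \cum_t\Big(\underbrace{(A^{(w_1,j)})^{-1}U_{w_1},\dots,(A^{(w_1,j)})^{-1}U_{w_1}}_{t-L+1\text{ times}},\,(A^{(w_2,j)})^{-1}U_{w_2},\dots,(A^{(w_L,j)})^{-1}U_{w_L}\Big),
\]
which is well defined because $t\ge L$. Substituting $U_{w_m}=\sum_{j':\,w_m\in Q_{j'}}A^{(w_m,j')}S_{j'}$ and expanding by multilinearity of cumulants (the linearity property in Fact~\ref{lem:cumulant}), $C$ becomes a sum of cross-cumulants of transformed source components, one per choice of a source in each of the $t$ slots. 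By independence of the $S_{j'}$ (Fact~\ref{lem:cumulant}), any term whose slots touch two or more distinct sources partitions into two mutually independent blocks and hence vanishes; so only the ``diagonal'' terms survive, and a diagonal term for $S_{j'}$ is nonzero only when $S_{j'}$ occurs in every $U_{w_1},\dots,U_{w_L}$, i.e. only when $T\subseteq Q_{j'}$. For $j'=j$ the transformations telescope, $(A^{(w_m,j)})^{-1}A^{(w_m,j)}=I$, so that contribution is exactly $\cum_t(S_j)$. Hence
\[
C=\cum_t(S_j)+\sum_{\substack{j'\ne j\\ T\subseteq Q_{j'}}}\cum_t\big((A^{(w_1,j)})^{-1}A^{(w_1,j')}S_{j'},\dots,(A^{(w_L,j)})^{-1}A^{(w_L,j')}S_{j'}\big).
\]

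Now $L$-distinguishability (Definition~\ref{def:distinguishable}) closes the argument: any $j'\ne j$ with $T\subseteq Q_{j'}$ must have $Q_j\subseteq Q_{j'}$, hence $Q_j\subsetneq Q_{j'}$ since the $Q$'s are distinct; by the invariant $\cum_t(S_{j'})$ is already known (taken as $0$ if $S_{j'}$ was flagged zero), so by the linearity property of cumulants every correction term above is computable from known quantities. Subtracting them from $C$ yields $\cum_t(S_j)$ exactly; this is precisely the ``subtract all cumulants of $Q_j$'' bookkeeping of Algorithm~\ref{alg:cumulant}. An induction on the processing order (base case: a maximal $Q_j$, whose correction sum is empty) then shows every $\cum_t(S_j)$ is output correctly. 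For the running time there are at most $p\le 2^k-1$ extractions; each uses a constant number of $d\times d$ inverses, one order-$t$ cross-cumulant on at most $dk$ underlying coordinates (computed via the partition formula at cost $\mbox{poly}((L+t)!,(dk)^{t})$, the factorial absorbing the partition sums at orders up to $t$ and the $L+1$-order quantities the algorithm reuses), and at most $p$ corrections, each a mode-wise linear map of a $d^{t}$-tensor; collecting these gives the claimed $\mbox{poly}((L+t)!,(dk)^{L+t})$.

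The step I expect to be the real obstacle is the middle one: carefully justifying the multilinear expansion of $C$ and invoking the precise fact that a cross-cumulant vanishes whenever its arguments split into two mutually independent sub-collections (this is what kills every off-diagonal source assignment), and then verifying that $L$-distinguishability is exactly the condition that leaves only the already-processed superset sources as the surviving corrections. The remaining pieces---existence of the reverse-inclusion order, the telescoping when $j'=j$, the induction, and the runtime accounting---are routine and mirror the contrastive proof of Theorem~\ref{lem:cumulantsimple}.
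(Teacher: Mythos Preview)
Your proposal is correct and follows essentially the same approach as the paper: induction along the reverse-inclusion processing order, using the distinguishing set $T$ to form an order-$t$ cross-cumulant whose only surviving contributions come from $S_j$ and sources indexed by strict supersets of $Q_j$, then subtracting the latter by the induction hypothesis. The only cosmetic differences are that the paper repeats $w_L$ rather than $w_1$ to pad out to $t$ arguments, and that the paper reaches the ``diagonal-only'' expansion in one step via additivity of cumulants under independence, whereas you first expand multilinearly and then kill the off-diagonal terms by the vanishing of cross-cumulants with independent blocks; both routes are standard and equivalent.
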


Note that in the most general case it is impossible to find cumulants with order $t < L$, because there can be many different variables $S_j$'s but not enough views. Both Algorithms \ref{alg:findlinearsimple} and \ref{alg:cumulant} are robust to noise, with sample complexity that depends polynomially on the relevant condition numbers, and exponential in the order of cumulant considered. 
For more details see Appendix~\ref{app:general}.

\subsection{Related models}
Independent component analysis (ICA)\cite{comon2010handbook} may appear similar to our model, but it is actually quite different. In ICA, let ${\bf s} = [s_1, ..., s_n]$ be a vector of latent sources, where $s_i$'s are \emph{one dimensional} independent, non-Gaussian random variables. There is an unknown mixture matrix ${\bf A}$ and the observations are ${\bf x = As}$. Given many samples ${\bf x^{(t)}}$, the goal is to deconvolve and recover each sample $s_i$. In our setting, each $s_i$ can be a \emph{high-dimensional} vector with complex correlations. It is information-theoretically not possible to deconvolve and recover the individual samples $s_i$. Instead we aim to learn the distribution $S_{i}(\theta)$ without having explicit samples from it. 

Another related model is canonical correlation analysis (CCA)\cite{hotelling1936relations}. The generative model interpretation of CCA is: there is a common signal $z\sim N(0,I)$, and view-specific signals $z^{(m)}\sim N(0,I)$. Each view $x^{(m)}$ is then sampled according to 
$N(A^{(m)}z + B^{(m)}z^{(m)}, \Sigma^{(m)})$, where $m$ index the view.
CCA is equivalent to maximum likelihood estimation of $A^{(m)}$ in this generative model. In our framework, CCA corresponds to the very restricted setting where $S_1, S_2, S_3$ are all Gaussians. \alg learns $S_1$ without making such parametric assumptions about $S_2$ and $S_3$. Moreover, using CCA, it is not clear how to learn the distribution $S_1$ if it is not orthogonal to the shared subspace $S_2$. In our experiments, we show that the naive approach of performing CCA (or kernel CCA) followed by taking the orthogonal projection leads to very poor performance. 
Factor analysis (FA)\cite{harman1976modern} also corresponds to a multivariate Gaussian model, and hence does not address the general problem that we solve. In FA, latent variables are sampled $z \sim N(0, I)$ and the observations are $x | z \sim N(\mu + \Lambda z, \Psi)$. 

A different notion of contrastive learning was introduced in \cite{zou13}. They focused on settings where there are two mixture models with overlapping mixture components. The method there applies only for Latent Dirichlet allocation and Hidden Markov Models and requires explicit parametric models for each component.

\section{Using Cumulants in learning applications}
\label{sec:apply}
The cumulant extraction techniques of Section~\ref{sec:cumulant} constructs unbiased estimators for the cumulants of $S_i$. 
In this section we show how to use the estimated cumulants/moments to perform maximum likelihood learning of $S_i$. For concreteness, we frame the discussion on the contrastive learning setting, where we want to learn $S_1$. For general \alg the method works when $L$ (see Definition~\ref{def:distinguishable}) is small or the distributions have specific relationship between lower and higher cumulants. 

\subsection{Method-of-Moments}\label{sec:mom} \alg recovers the cumulants of $S_1$, from which we can construct all the moments of $S_1$ in time $O((td)^t)$.  This makes it possible to directly combine \alg with any estimation algorithm based on the method-of-moments. Method-of-moments have numerous applications in machine learning. The simplest (and most commonly used) example is arguably {\em principal component analysis}, where we want to find the maximum variance directions in $S_1$. This is only related to the covariance matrix $\E[S_1S_1^\top]$. \alg removes the covariance due to $S_2$ and constructs an unbiased estimator of $\E[S_1S_1^\top]$, from which we can extract the top eigen-space.

The next simplest model is least squares regression (LSR). Suppose the distribution $S_1$ contains samples and labels $(X,Y) \in \R^d \times R$, and only the samples are corrupted by perturbations, i.e. $Y$ is independent of $S_2$. LSR tries to find a parameter $\beta$ that minimizes $\E[(Y-\beta^\top X)^2]$. The optimal solution again only depends on the moments of $(X,Y)$: $\beta^* = (\E[XX^\top])^{-1} \E[YX]$. Using the second-order cumulants/moments extracted from \alg, we can efficiently estimate 
$\beta^*$.

Method-of-moment estimators, especially together with tensor decomposition algorithms have been successfully applied to learning many latent variable models, including Mixture of Gaussians (GMM), Hidden Markov Model, Latent Dirichlet Allocation and many others (see \cite{anandkumar2012tensor}). \alg can be used in conjunction with all these methods. We'll consider learning GMM in Section~\ref{sec:expt}.

\subsection{Approximating Gradients}
\label{sec:gradient}
There are many machine learning models where it's not clear how to apply method-of-moments. Gradient descent (GD) and stochastic gradient descent (SGD) are general purpose techniques for parameter estimation across many models. Here we show how to combine \alg with gradient descent. 
The key idea is that the extracted cumulants/moments of $S_1$ forms a polynomial basis. If the gradient of the log-likelihood can be approximated by a low-degree polynomial in $S_1$, then the extracted cumulants from \alg can be used to approximate this gradient. 

Consider the general setting where we have a model $\mathcal{D}$ with parameter $\theta$, and for any sample $s_1$ the likelihood is $\mathcal{L}(\theta,s_1)$. The maximum likelihood estimator tries to find the parameter that maximizes the likelihood of observed samples:
$
\theta^* = \arg\max \E[\log \mathcal{L}(\theta,s_1)].
$
In many applications, this is solved using stochastic gradient descent, where we pick a random sample and move the current guess to the corresponding gradient direction:
$
\theta^{(t+1)} = \theta^{(t)} + \eta_t \nabla_\theta \log \mathcal{L}(\theta,s_1^{(t)}),
$
where $\eta_t$ is a step size and $s_1^{(t)}$ is the $t$-th sample. For convex functions this is known to converge to the optimal solution \cite{sgd}. Even for non-convex functions this is often used as a heuristic.

If the gradient of log-likelihood $\nabla_\theta \log \mathcal{L}(\theta,s_1)$ is a low degree polynomial in $s_1$, then using the lower order moments we can obtain an {\em unbiased} estimator for $\E[\nabla_\theta \log \mathcal{L}(\theta,S_1)]$ with bounded variance, which is sufficient for stochastic gradient to work. This is the case for linear least-squares regression, and its regularized forms using either $\ell_1$ or $\ell_2$ regularizer. 

In the case when log-likelihood is not a low degree polynomial in $S_1$, we approximate the gradient by a low degree polynomial, either through simple Taylor's expansion or other polynomial approximations (e.g. Chebyshev polynomials, see more in \cite{powell1981approximation}). This will give us a biased estimator for the gradient whose bias decreases with the degree we use. In general, when the (negative) log-likelihood function is strongly convex we can still hope to find an approximate solution:

\begin{lemma}
\label{lem:approxgrad}
Suppose the negative log-likelihood function $F(\theta) = -\E[\log \mathcal{L}(\theta,S_1)]$ is $\mu$-strongly convex and $H$-smooth, given an estimator $G(\theta)$ for the gradient such that $\|G(\theta) - \nabla F(\theta)\| \le \epsilon$, gradient descent using $G(\theta)$ with step size $\frac{1}{2H}$ converges to a solution $\theta$ such that $\|\theta - \theta_*\|^2 \le \frac{8\epsilon^2}{\mu^2}$.
\end{lemma}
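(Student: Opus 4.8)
The plan is to control the distance $r_t := \|\theta^{(t)} - \theta_*\|$ between the $t$-th iterate of the inexact descent $\theta^{(t+1)} = \theta^{(t)} - \eta\, G(\theta^{(t)})$, with $\eta = \frac{1}{2H}$, and the unique minimizer $\theta_*$ of $F$, by establishing a one-step recursion of the form $r_{t+1} \le \rho\, r_t + \eta\epsilon$ for a contraction factor $\rho < 1$ depending only on $\mu/H$. Summing the geometric series then gives $\limsup_t r_t \le \eta\epsilon/(1-\rho)$, and choosing the constants correctly makes the right-hand side at most $\frac{2\epsilon}{\mu}$, so that $r_t^2 \le \frac{4\epsilon^2}{\mu^2} \le \frac{8\epsilon^2}{\mu^2}$ for all large $t$ (the factor $8$ in the statement leaves slack for a cruder constant and for the transient term).

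To set up the recursion I would write $G(\theta) = \nabla F(\theta) + e(\theta)$ with $\|e(\theta)\| \le \epsilon$ and split the update as $\theta^{(t+1)} - \theta_* = \big(\theta^{(t)} - \theta_* - \eta\nabla F(\theta^{(t)})\big) - \eta\, e(\theta^{(t)})$, so that by the triangle inequality $r_{t+1} \le \|\theta^{(t)} - \theta_* - \eta\nabla F(\theta^{(t)})\| + \eta\epsilon$; it then suffices to bound the exact-gradient step by $\rho\, r_t$. Since $\nabla F(\theta_*) = 0$, expanding the square gives
\[
\|\theta^{(t)} - \theta_* - \eta\nabla F(\theta^{(t)})\|^2 = r_t^2 - 2\eta\langle \nabla F(\theta^{(t)}) - \nabla F(\theta_*),\, \theta^{(t)} - \theta_*\rangle + \eta^2\|\nabla F(\theta^{(t)}) - \nabla F(\theta_*)\|^2 ,
\]
and I would invoke the standard co-coercivity-type inequality for $\mu$-strongly convex, $H$-smooth $F$,
\[
\langle \nabla F(x) - \nabla F(y),\, x - y\rangle \;\ge\; \frac{\mu H}{\mu+H}\|x-y\|^2 + \frac{1}{\mu+H}\|\nabla F(x) - \nabla F(y)\|^2 .
\]
With $\eta = \frac{1}{2H}$ the coefficient of $\|\nabla F(\theta^{(t)}) - \nabla F(\theta_*)\|^2$ becomes $\eta^2 - \frac{2\eta}{\mu+H}$, which is $\le 0$ whenever $\mu \le 3H$ (true since $\mu \le H$), so that term is dropped and we are left with $\|\theta^{(t)} - \theta_* - \eta\nabla F(\theta^{(t)})\|^2 \le \big(1 - \frac{\mu}{\mu+H}\big) r_t^2$. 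Using $\mu \le H$ to bound $1 - \frac{\mu}{\mu+H} \le 1 - \frac{\mu}{2H}$ and then $\sqrt{1-x}\le 1-x/2$ gives $\rho := \sqrt{1 - \frac{\mu}{\mu+H}} \le 1 - \frac{\mu}{4H}$, hence $1-\rho \ge \frac{\mu}{4H}$.

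Combining the two pieces, $r_{t+1} \le \rho\, r_t + \frac{\epsilon}{2H}$, so $r_t \le \rho^t r_0 + \frac{\epsilon/(2H)}{1-\rho} \le \rho^t r_0 + \frac{\epsilon/(2H)}{\mu/(4H)} = \rho^t r_0 + \frac{2\epsilon}{\mu}$; letting $t\to\infty$ and squaring yields $\|\theta - \theta_*\|^2 \le \frac{4\epsilon^2}{\mu^2} \le \frac{8\epsilon^2}{\mu^2}$. The step I expect to require the most care is precisely obtaining a genuine contraction ($\rho<1$) at the prescribed step size $\frac{1}{2H}$: the naive bound $\|\theta - \theta_* - \eta\nabla F(\theta)\|^2 \le (1 - 2\eta\mu + \eta^2 H^2)\,r_t^2$, obtained from strong convexity and Lipschitzness alone, gives a factor of $\tfrac54 - \mu/H$, which exceeds $1$ when $\mu \ll H$; the co-coercivity inequality is exactly what repairs this. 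An alternative that sidesteps the issue is to run the potential argument on function values: the descent lemma together with $\eta = \frac{1}{2H}$ and Young's inequality gives $F(\theta^{(t+1)}) - F(\theta_*) \le \big(1 - \frac{\mu}{2H}\big)\big(F(\theta^{(t)}) - F(\theta_*)\big) + \frac{\epsilon^2}{4H}$ after using the Polyak--Lojasiewicz inequality $\|\nabla F\|^2 \ge 2\mu\big(F - F(\theta_*)\big)$, and then strong convexity $\frac{\mu}{2}\|\theta-\theta_*\|^2 \le F(\theta) - F(\theta_*)$ converts the limiting value $\frac{\epsilon^2}{2\mu}$ into an $O(\epsilon^2/\mu^2)$ bound on $\|\theta - \theta_*\|^2$, again within the claimed constant.
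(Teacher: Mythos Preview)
Your argument is correct and self-contained, but it proceeds differently from the paper. The paper works directly with the squared distance: it first combines strong convexity and smoothness to obtain $\langle \nabla F(\theta),\theta-\theta_*\rangle \ge \tfrac{1}{2H}\|\nabla F(\theta)\|^2 + \tfrac{\mu}{2}\|\theta-\theta_*\|^2$, then shows that this inequality survives (with weaker constants and an additive $O(\epsilon^2/\mu)$ error) when $\nabla F$ is replaced by $G$, and finally expands $\|\theta^{(t+1)}-\theta_*\|^2$ to get a one-step recursion $\|\theta^{(t+1)}-\theta_*\|^2 \le (1-\tfrac{\mu}{4H})\|\theta^{(t)}-\theta_*\|^2 + \tfrac{2\epsilon^2}{\mu H}$, summing to exactly $8\epsilon^2/\mu^2$. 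This is cast as an instance of an ``approximate gradient'' framework. You instead split off the error first by the triangle inequality on $r_t$ (not $r_t^2$), and then control the exact-gradient step via the co-coercivity inequality for $\mu$-strongly convex, $H$-smooth functions; this yields $r_{t+1}\le \rho\,r_t + \eta\epsilon$ with $\rho\le 1-\tfrac{\mu}{4H}$ and hence a limiting bound of $4\epsilon^2/\mu^2$, slightly sharper than the paper's constant. Your route is arguably more elementary (no intermediate claim about $\langle G,\theta-\theta_*\rangle$ is needed), while the paper's route plugs into a broader template for inexact first-order methods. Your backup function-value/PL argument is yet a third valid route, also not the one the paper uses.
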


When high degree polynomials are needed to approximate the gradient, our algorithm requires number of samples that grows exponentially in the degree. 

\paragraph{Logistic Regression}
We give a specific example to illustrate using \alg and low degree polynomials to simulate gradient descent. Consider the basic logistic regression setting, where the samples $s_1 = (x,y) \in \R^d \times \{0,1\}$, and the log-likelihood function is $\log \mathcal{L}(\theta, s_1) = \log \frac{e^{y \theta^\top x}}{1+e^{\theta^\top x}}$. The gradient of the log-likelihood is:
$
\nabla_\theta \log \mathcal{L}(\theta,s_1) = (y - \frac{e^{\theta^\top x}}{1+e^{\theta^\top x}}) x.
$

We can then approximate the function $\frac{e^{\theta^\top x}}{1+e^{\theta^\top x}}$ using a low degree polynomial in $\theta^\top x$. As an example, we use 3rd degree Chebychev:
$
\frac{e^{\theta^\top x}}{1+e^{\theta^\top x}} \approx  0.5 + 0.245\theta^\top x -  0.014(\theta^\top x)^3.
$
The gradient we take in each step is 
$$
\E[\nabla_\theta \log \mathcal{L}(\theta,S_1)] \approx \E[YX] - 0.5\E[X] - 0.245\E[X (\theta^\top X)] + 0.014\E[X (\theta^\top X)^3].
$$
To estimate this approximation, we only need quadratic terms $\E[X(\theta^\top X)]$ and a {\em projection} of the 4-th order moment $\E[X(\theta^\top X)^3 ]$. These terms are computed from the projected 2nd and 4-th order cumulants of $X$ that are extracted from the cumulants of $U$ and $V$ via Section~\ref{sec:cumulant}. Because of the projection these quantities are much easier to compute (in fact, they can be estimated in linear time).

\section{Experiments}
\label{sec:expt}

\begin{figure}[!htb]
\centering
\includegraphics[trim = {1cm 1cm 1cm 1cm}, clip,scale=0.75]{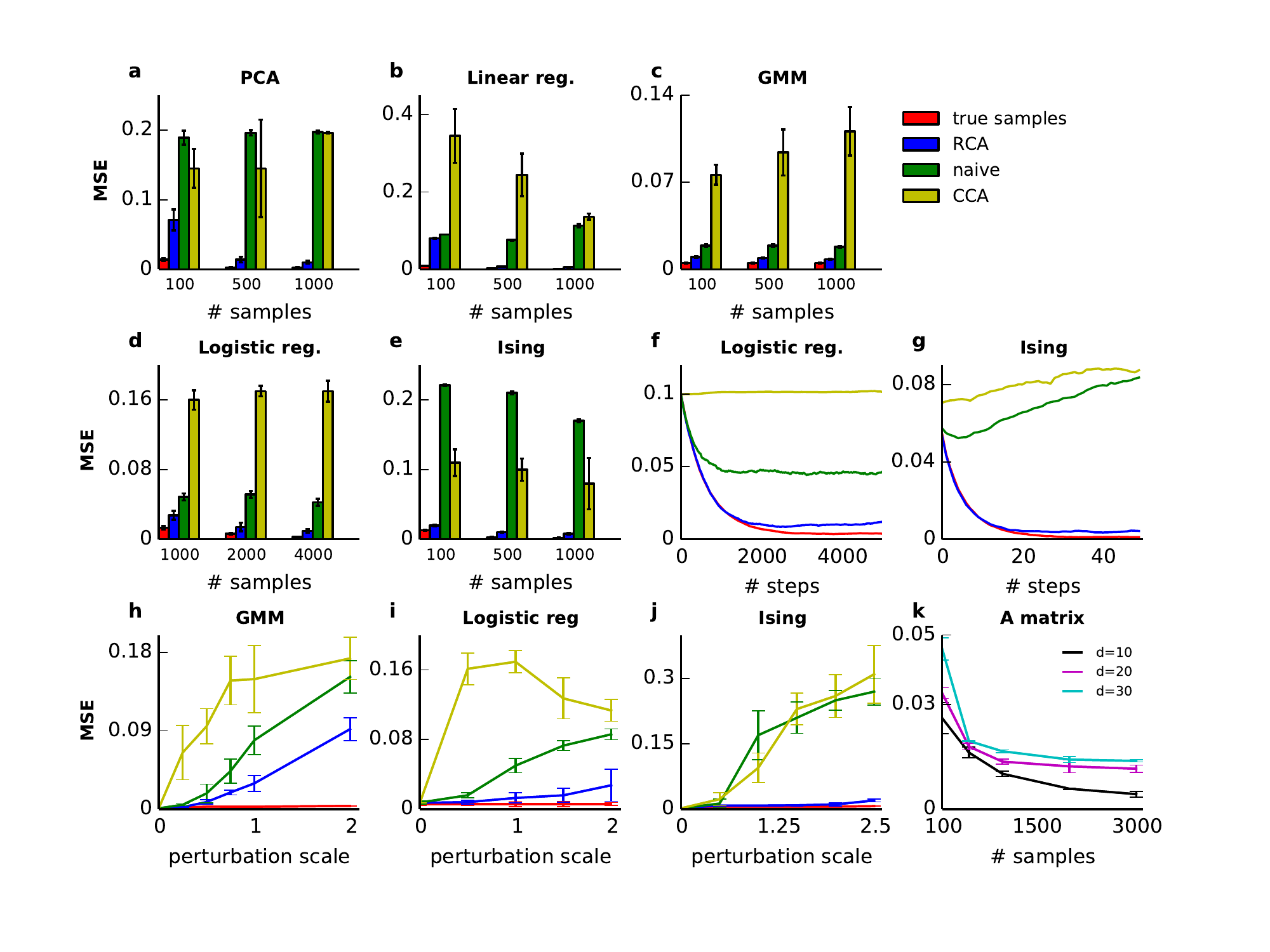}
\caption{All the $y$-axis indicate mean squared error (MSE). \textbf{a-e} shows the tradeoff between sample size and MSE for the four algorithms in each of the five applications. \textbf{f,g} shows the convergence rate of SGD for the logistic and Ising models. \textbf{h-j} shows the tradeoff between perturbation strength and MSE. \textbf{k} shows the inference accuracy of $A$. Error bars corresponds to standard deviation.}
\label{fig:expt1}
\end{figure}

In the experiments, we focus on the contrastive learning setting where we are given observations of $U = S_1 + S_2$ and $V = AS_2 + S_3$ and the goal is to estimate the parameters for the $S_1$ distribution. Our approach can also learn the shared component $S_2$ as well as $S_3$. We tested our method in five settings, where $S_1$ corresponds to: low rank Gaussian (PCA), linear regression, mixture of Gaussians (GMM), logistic regression and the Ising model. The first three settings illustrate combining \alg with method-of-moments and the latter two settings requires \alg with polynomial approximation to stochastic gradient descent. 
In each setting, we compared the following four algorithms:
\begin{enumerate}[topsep=0pt,itemsep=0pt]
\item The standard learning algorithm using the actual samples $s_1 \sim S_1(\theta)$ to learn the parameters $\theta$. This is the gold-standard, denoted as `true samples'.
\item Our contrastive \alg algorithm using paired samples from $U$ and $V$ to learn $S_1(\theta)$.
\item The naive approach that ignores $S_2$ and uses $U$ to learn $S_1(\theta)$ directly, denoted as `naive'.
\item First perform Canonical Correlation Analysis (CCA) on $U$ and $V$, and  project the samples from $U$ onto the subspace orthogonal to the canonical correlation subspace. Then learn $S_1$ from the projected samples of $U$. We denote this as `CCA'.  
\end{enumerate}
In all five settings, we let $S_3$ be sampled uniformly from $[-1,1]^{d}$, where $d$ is the appropriate dimension of $S_3$. The empirical results are robust to other choices of $S_3$ that we have tried, e.g. multivariate Gaussian or mixture of Gaussians.

\paragraph{Contrastive PCA.} $S_1$ was set to have a principal component along direction $v_1$, i.e. $s_1 \sim \N(0, v_1v_1^\top + \sigma^2 I)$. $S_2$ was sampled from $\mbox{Unif}([-1,1]^d) + v_2v_2^\top$ and $v_1$, $v_2$ are random unit vectors in $\R^d$. \alg constructs an unbiased estimator of $\E[S_1S_1^\top]$ from the samples of $U$ and $V$. We then report the top eigenvector of this estimator as the estimated $\hat{v}_1$. We evaluate each algorithm by the mean squared error (MSE) of the inferred $\hat{v}_1$ to the true $v_1$. 

\paragraph{Contrastive regression.} $S_1$ is the uniform distribution, $s_1 \sim \mbox{Unif}([-1, 1]^d)$ and $y = \beta^\top s_1 + \N(0, 1)$.  $S_2$ was sampled from $\mbox{Unif}([-1,1]^d) + v_2v_2^\top$ and $\beta$, $v_2$ are random unit vectors in $\R^d$. Our approach gives unbiased estimator of $\E[S_1S_1^\top]$ from which we estimate $\hat{\beta} = (\E[S_1S_1^\top])^{-1} \E[YS_1]$. All algorithms are evaluated by the MSE between the inferred $\hat{\beta}$ and the true $\beta$.  

\paragraph{Contrastive mixture of Gaussians.} $S_1$ is a mixture of $d$ spherical Gaussians in $\R^d$, $s_1 \sim \sum_{k = 1}^d \frac{1}{d} \N(\mu_k^{(1)}, \sigma^2)$. $S_2$ is also a mixture of spherical Gaussians, $s_2 \sim \sum_{k = 1}^d \frac{1}{d} \N(\mu_k^{(2)}, \sigma^2)$. \alg gives unbiased estimators of the third-order moment tensor, $\E[s_1 \otimes s_1 \otimes s_1]$. We then use the estimator in \cite{hsukakade} to get a low rank tensor whose components correspond to center vectors, and apply alternating minimization (see \cite{kolda2009tensor}) to infer $\hat{\mu}^{(1)}_k$. Algorithms are evaluated by the MSE between the inferred centers $\{\hat{\mu}^{(1)}_k\}$ and the true centers $\{\mu^{(1)}_k\}$.

\paragraph{Contrastive logistic regression.} Let $s_1 \sim \mbox{Unif}([-1, 1]^d)$ and $y = 1$ with probability $\frac{1}{1 + e^{-\beta^\top s_1}}$. $S_2$ was sampled from $\mbox{Unif}([-1,1]^d) + v_2v_2^\top$, and $\beta$, $v_2$ are unit vectors in $\R^d$. We use the 4-th order Chebychev polynomial approximation to the SGD of logistic regression as in Section~\ref{sec:gradient}.
Evaluation is the MSE error between the inferred $\hat{\beta}$ and the true $\beta$.

\paragraph{Contrastive Ising model.} Let $S_1$ be a mean-zero Ising model on $d$-by-$d$ grid with periodic boundary conditions. Each of the $d^2$ vertices are connected to four neighbors and can take on values $\{\pm 1\}$. The edge between vertices $i$ and $j$ is associated with a coupling $J_{ij} \sim \mbox{Unif}[-1,1]$. The state of the Ising model, $s_1$, has probability $\frac{1}{Z} e^{\sum_{(i,j)\in E} J_{ij}s_1(i)s_1(j)}$, where $Z$ is the partition function. We let $S_2$ also be a $d$-by-$d$ grid of spins where half of the spins are independent Bernoulli random variables and the other half are correlated, i.e. they are all 1 or all -1 with probability 0.5. We use composite likelihood to estimate the couplings $J_{ij}$ of $S_1$, which is asymptotically consistent with MLE of the true likelihood \cite{complike}. For the gold-standard baseline (which uses the true samples $s_1$), we use the exact gradient of the composite likelihood. For \alg, we used the 4-th order Taylor approximation to the gradient. Evaluation is the MSE between the true $J_{ij}$ and the estimated $\hat{J}_{ij}$. 

\paragraph{Results.} For the method-of-moment applications--PCA, linear regression, GMM--we used 10 dimensional samples for $U$ and $V$. The tradeoff between inference accuracy (measured in MSE) and sample size is shown in the top row of Figure~\ref{fig:expt1}. Even with just 100 samples, \alg performs significantly better than the naive approach and CCA. With 1000 samples, the accuracy of \alg approaches that of the algorithm using the true samples from $S_1$. It is interesting to note that projecting onto the subspace orthogonal to CCA can perform much worse than even the naive algorithm. In the linear regression setting, for example, when the signal of $S_2$ happens to align with $\beta$, the direction of prediction, projecting onto the subspace orthogonal to $S_2$ loses much of the predictive signal. 

In the SGD settings, we used a 10 dimensional logistic model and a 5-by-5 Ising model (50 $J_{ij}$ parameters to infer). \alg also performed substantially better than the two benchmarks (Figure~\ref{fig:expt1} d, e). In all the cases, the accuracy of \alg improved monotonically with increasing sample size. This was not the case for the Naive and CCA algorithms, which were unable to take advantage of larger data due to model-misspecification. In Figure~\ref{fig:expt1} f and g, we plot the learning trajectory of \alg over the SGD steps for representative runs of the algorithm with 1000 samples. \alg converges to the final state at a rate similar to the true-sample case. The residual error of \alg is due to the bias introduced by approximating the sigmoid with low-degree polynomial. When many samples are available, a higher-degree polynomial approximation can be used to reduce this bias. 

We also explored how the algorithms perform as the magnitude of the signal in $S_2$ is increased compared to $S_1$ (Figure~\ref{fig:expt1} h-j) with fixed 1000 samples. In these plots the $x$-axis measures the ratio of standard deviations of $S_2$ and $S_1$.
At close to 0, most of the signal of $U$ comes from $S_1$, and all the algorithms are fairly accurate. As the strength of the perturbation increases, \alg performs significantly better than the benchmarks, especially in the Ising model. Finally we empirically explored the sample complexity of the subroutine to recover the $A$ matrix from the 4th order cumulants. Figure~\ref{fig:expt1} k shows the MSE between the true $A$ (sampled $\sim \mbox{Unif}[-1,1]^{d \times d}$) and the inferred $\hat{A}$ as a function of the sample size. Even with 1000 samples, we can obtain reasonable estimates of $A\in \R^{30\times 30}$.   

\paragraph{Biomarkers experiment.} We applied \alg to a real dataset of DNA methylation biomarkers. Twenty biomarkers (10 test and 10 control) measured the DNA methylation level (a real number between 0 and 1) at twenty genomic loci across 686 individuals \cite{zou14ewasher}. Each individual was associated with a binary disease status $Y$. Logistic regression on the ten test biomarkers was used to determine the weight vector, $\beta$, which quantifies the contribution of the methylation at each of these ten locus to the disease risk. The other ten independent loci are control markers. Getting accurate estimates for the values of $\beta$ is important for understanding the biological roles of these loci. In this dataset, all the samples were measured on one platform, leading to relatively accurate estimate of $\beta$. In many cases samples are collected from multiple facilities (or by different labs). 
We simulated this within our \alg framework. We let $S_1$ be the original data matrix of the ten test markers across the 686 samples. We let $S_3$ be the original data matrix of the ten control markers in these same samples. We modeled $S_2$ as a mixture model, where samples are randomly assigned to different components that 
capture lab specific biases. The perturbed observations are $U = S_1 + S_2$ and $V = AS_2 + S_3$, i.e. $U$ and $V$ simulate the measurements for the test and control markers, respectively, when the true signal has been perturbed by this mixtures distribution of lab biases. We assume that we can only access $U$ and $V$ and do not know $S_2$, i.e. where each sample is generated. Running logistic regression directly on $U$ and the phenotype $Y$ obtained a MSE of 0.24 (std 0.03) between the inferred $\hat{\beta}$ and the true $\beta$ measured from directly regressing $S_1$ on $Y$. Directly using CCA also introduce significant errors with MSE of 0.25 (std 0.02). Using all the control markers as covariates in the logistic regression, the MSE of the test markers' $\beta$ was 0.14 (std 0.03). In general, adding $V$ as covariates to the regression can eliminate $S_2$ at the expense of adding $S_3$, and can reduce accuracy when $S_3$ is larger than $S_2$. Using our \alg logistic regression on $U$ and $V$, we obtained significantly more accurate estimates of $\theta$, with MSE 0.1 (std 0.03). See Appendix for more analysis of this experiment. 

\clearpage
\bibliographystyle{plain}
\bibliography{arxiv}

\newpage
\appendix

\section{More Tensor and Cumulant Notations}

\label{app:tensor}

In this section we introduce the notations and basics for tensors and cumulants.

\paragraph{Matrix Notations} For a matrix $M \in \R^{n\times m}$ we use $\|M\|$ to denote its spectral norm $\sup_{\|x\| = 1} \|Mx\|$, $\|M\|_F$ to denote its Frobenius norm $\|M\|_F = \sqrt{\sum_{i,j} M_{i,j}^2}$, and $\sigma_{min}(M)$ to denote its smallest singular value.

When $n\ge m$ and the matrix $M$ has full column rank, we use $M^\dag$ to denote its Moore-Penrose pseudoinverse which in particular satisfy $M^\dag M = I$.

We also sometimes use the Kronecker product of matrices, for $A\in \R^{m\times n}$ and $B\in \R^{p\times q}$, $A\otimes B$ is a matrix in $\R^{mp \times nq}$ that has the following block structure:

$$
A\otimes B = \left[\begin{array}{cccc}A_{1,1}B & A_{1,2} B & \cdots & A_{1,n} B \\
A_{2,1} B & A_{2,2}B & \cdots &A_{2,n}B \\ \vdots & \vdots & & \vdots \\ A_{m,1} B & A_{m,2} B &\cdots & A_{m,n} B \end{array}\right]
$$

The singular values of $A\otimes B$ is just the product of singular values of $A$ and $B$.

\paragraph{Tensor Notations} A tensor $T \in \R^{d^t}$ is a $t$-dimensional array, and is frequently used to represent higher order moments or cumulants. We index the elements in the tensor using a $t$-tuple $(i_1,i_2,...,i_t)\in [d]^t$. The entries of tensor product $[u_1\otimes u_2 \otimes\cdots u_t]_{(i_1,i_2,...,i_t)}$ is simply the product of corresponding entries $\prod_{j=1}^t u_j(i_j)$. We use $u^{\otimes t}$ to denote $u\otimes u\otimes \cdots \otimes u$ $t$ times.

For a distribution $X\in \R^d$, the $t$-th order moment is a tensor $\E[X^{\otimes 4}]$, whose $(i_1,i_2,...,i_t)$-th entry is equal to $\E[X_{i_1}X_{i_2}\cdots X_{i_t}]$. Later we shall see cumulants can also be conveniently represented as tensors.

A tensor can be viewed as a multi-linear form (just as a matrix $M$ can be viewed as a bilinear form $u^T M v$). For a tensor $T$ we define $T(M_1,M_2,\ldots, M_t)$ to be
$$
T(M_1,M_2,\ldots, M_t)_{(i_1,...,i_t)} = \sum_{(j_1,...,j_t)\in [d]^t} T_{(j_1,...,j_t)} \prod_{l = 1}^t M_l(j_l,i_l).
$$

This multi-linear form works well with the moment tensors, especially for matrices $M_1,...,M_t$ we always have
$$
\E[X^{\otimes t}](M_1,...,M_t) = \E[(M_1^\top X)\otimes (M_2^\top X)\otimes \cdots \otimes (M_t^\top X)].
$$

Often to simplify operations tensors are unfolded to become matrices. There can be many ways to unfold a tensor, but in this paper we mostly use a particular unfolding which makes the tensor into a $\R^{d^{t-1} \times d}$ matrix:
$$
unfold(T)_{(i_1,...,i_{t-1}), i_t} = T_{(i_1,...,i_t)}.
$$

Similar to matrices, we also define the Frobenius norm of tensors to be the $\ell_2$ norm of all its entries, in particular
$$
\|T\|_F = \|unfold(T)\|_F = \sqrt{\sum_{i_1,...,i_t} T_{(i_1,...,i_t)}^2}.
$$

\paragraph{Cumulants} Cumulants provide an alternative way to describe the lower order correlations of a random variable. Unlike moments, cumulants have the nice property that the cumulant of sum of independent random variables equals to the sum of cumulants. Formally, for a random variable $X \in \R$ the cumulant is defined to be the coefficients of the cumulant generating function $\log \E[e^{tX}]$ ($\cum_t(X)$ is just $t!$ times the coefficient in front of $X^t$).  When the variables are different the cross-cumulants (similar to covariance) can similarly  be defined, and it can be computed as:

\begin{equation}
\cum_t(X_1,...,X_t)  = \sum_{\pi}(|\pi|-1)!(-1)^{|\pi|-1}\prod_{B\in \pi}\E[\prod_{i\in B} X_i]. \label{eq:cumcompute}
\end{equation}

In this formula, $\pi$ is enumerated over all partitions of $[t]$, $|\pi|$ is the number of parts in partition and $B$ runs  through the list of all parts.

Similarly, it is possible to define cumulants for multivariate distributions. For random variable $X\in \R^d$ $\cum_t(X)_{(i_1,...,i_t)} = \cum_t(X_{i_1},...,X_{i_t})$. This cross cumulant can be computed in a similar way as Equation (\ref{eq:cumcompute}), however the products should be replaced by tensor products and the ordering of coordinates is important when doing the tensor product.

\begin{fact}\label{lem:cumulant:app}
Suppose $X_1,...,X_t$ are random variables in $\R^d$. The $t$-th order cumulant $\cum_t(X_1,...,X_t)$ is a tensor in $\R^{d^t}$ that have the following properties:
\begin{enumerate}
\item(Independence) If $(X_1,...,X_t)$ and $(Y_1,...,Y_t)$ are independent, then $\cum_t(X_1+Y_1,...,X_t+Y_t) = \cum_t(X_1,...,X_t) + \cum_t(Y_1,...,Y_t)$.
\item(Linearity) $\cum_t(M_1^\top X_1,...,M_t^\top X_t) = \cum_t(X_1,...,X_t)(M_1,...,M_t)$.
\item(Relation to Moments) The $t$-th order cumulant is a polynomial over the first $t$-th order moments. Similarly the $t$-th order moment is a polynomial over the first $t$-th order cumulants. Further both polynomials can be computed in $O(t!)$ time. Converting between first $t$-th order moments and cumulants for $d$-dimensional variables takes $O((td)^d)$ time.
\end{enumerate}
\end{fact}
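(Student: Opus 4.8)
The plan is to reduce every claim to the scalar case and then lean on the combinatorial moment--cumulant formula~(\ref{eq:cumcompute}). Since by definition the $(i_1,\dots,i_t)$ entry of $\cum_t(X_1,\dots,X_t)$ is the scalar cross-cumulant $\cum_t((X_1)_{i_1},\dots,(X_t)_{i_t})$, it suffices to prove the three properties for scalar random variables $Z_1,\dots,Z_t$ and lift them coordinatewise. The one preliminary I would nail down first is that~(\ref{eq:cumcompute}) agrees with the mixed-derivative definition $\cum_t(Z_1,\dots,Z_t)=\partial_{\theta_1}\cdots\partial_{\theta_t}K(\theta)\big|_{\theta=0}$ for $K(\theta)=\log\E[\exp(\sum_j\theta_j Z_j)]$; this is the classical identity obtained by expanding $\log(1+\cdot)$ and collecting terms indexed by set partitions (the exponential formula / Fa\`a di Bruno), and I would either cite it or give the short generating-function derivation.

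Granting that, the Independence property is cleanest through the generating function: when the families $(X_j)_j$ and $(Y_j)_j$ are independent of one another, $\E[\exp(\sum_j\theta_j(X_j+Y_j))]=\E[\exp(\sum_j\theta_j X_j)]\,\E[\exp(\sum_j\theta_j Y_j)]$, hence $K_{X+Y}=K_X+K_Y$, and taking $\partial_{\theta_1}\cdots\partial_{\theta_t}$ at the origin gives additivity of the scalar cross-cumulants; the tensor statement then holds entrywise. For Linearity, I would first record that scalar cross-cumulants are \emph{multilinear} in their arguments: in~(\ref{eq:cumcompute}) the index $1$ lies in exactly one block of each partition $\pi$, so replacing $Z_1$ by $aZ_1+bZ_1'$ splits each summand linearly by linearity of expectation, and summing over $\pi$ preserves this. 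Writing $(M_j^\top X_j)_{i_j}=\sum_{k_j}M_j(k_j,i_j)(X_j)_{k_j}$ and applying multilinearity slot by slot then gives $\cum_t((M_1^\top X_1)_{i_1},\dots,(M_t^\top X_t)_{i_t})=\sum_{k_1,\dots,k_t}\big(\prod_j M_j(k_j,i_j)\big)\cum_t((X_1)_{k_1},\dots,(X_t)_{k_t})$, which is exactly the definition of the multilinear action $\cum_t(X_1,\dots,X_t)(M_1,\dots,M_t)$.

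For the Relation to Moments, formula~(\ref{eq:cumcompute}) already displays $\cum_t$ as a polynomial of degree at most $t$ in the moments of order at most $t$; for the converse direction I would invoke M\"obius inversion over the partition lattice, yielding $\E[\prod_{i=1}^t Z_i]=\sum_{\pi}\prod_{B\in\pi}\cum_{|B|}((Z_i)_{i\in B})$, again a polynomial of degree at most $t$ in cumulants of order at most $t$. The running-time claims are then pure counting: the number of set partitions of $[t]$ is the Bell number, crudely bounded by $t!$, each summand is a product of at most $t$ previously computed scalars, and for $d$-dimensional variables one repeats this over the $O(d^{t})$ tensor entries present at each order up to $t$, which gives the stated bounds.

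I expect the only genuine obstacle to be the bookkeeping in that first step --- checking that the combinatorial formula and the cumulant-generating-function definition coincide, equivalently that moment--cumulant conversion is exactly M\"obius inversion on the partition lattice. After that, each property is a routine consequence of linearity of expectation, the factorization of expectations under independence, and elementary partition counting.
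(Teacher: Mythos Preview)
The paper does not prove this statement: it is presented as a ``Fact'' --- background material on cumulants --- and simply refers the reader to standard texts such as \cite{kenney1954mathematics} for details. So there is no proof in the paper to compare against.

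Your proposal is a correct and standard derivation of these properties. Reducing to the scalar case via the entrywise definition, establishing the equivalence of the combinatorial formula~(\ref{eq:cumcompute}) with the mixed partials of the cumulant generating function, and then reading off Independence from $K_{X+Y}=K_X+K_Y$ and Linearity from multilinearity of~(\ref{eq:cumcompute}) is exactly the textbook route. The M\"obius-inversion argument for the moment--cumulant conversion and the Bell-number bound for the running time are likewise the expected ones. One small remark: the paper's stated bound $O((td)^d)$ for the $d$-dimensional conversion is almost certainly a typo for $O((td)^t)$ (consistent with the $O((td)^t)$ in the main text and with your counting of $O(d^t)$ entries times $O(t!)$ partitions each), so your bookkeeping should target that corrected quantity rather than the literal exponent in the statement.
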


Intuitively, cumulants can measure how correlated two distributions are. The simplest case is $\cum_2(X,Y)$ which is equal to the covariance $\E[(X-\E[X])(Y-\E[Y])]$, and is 0 only if the two variables are not correlated in second order. For more detailed introductions to cumulants see books like \cite{kenney1954mathematics}.


\section{Details for Section~\ref{sec:cumulant}}

\label{sec:app:cumulant}

In this section, we prove the equations and algorithms in Section~\ref{sec:cumulant} indeed compute the desirable quantity, and further we give sample complexity bounds.

\subsection{Contrastive Learning}
\label{app:contrastive}
We first prove Equation (\ref{eq:findA}) computes the correct linear transformation.

\begin{lemma}
[Lemma \ref{lem:contrastivelinear} restated]
Suppose the unfolding of the 4-th order cumulant $unfold(\cum_4(AS_2,S_2,S_2,S_2))$ has full rank, given the exact cumulants $\cum_4(V,U,U,U)$ and $\cum_4(V,U,U,V)$, Equation (\ref{eq:findA}) finds the correct linear transformation in time $O(d^5)$. 
\end{lemma}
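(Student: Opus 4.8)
The plan is to expand both cross-cumulants appearing in Equation~(\ref{eq:findA}) by multilinearity, use independence to collapse each to a cumulant that involves only $S_2$, observe that the two resulting tensors differ exactly by the action of $A$ along the last mode, and finally cancel that action with the pseudoinverse. I would first record two elementary facts about cumulants. (i) $\cum_t(W_1,\dots,W_t)$ is multilinear in $W_1,\dots,W_t$; this is immediate from the partition formula~(\ref{eq:cumcompute}), since in each product $\prod_{B\in\pi}\E[\prod_{i\in B}W_i]$ exactly one block contains a given index and the corresponding factor is linear in that argument. (ii) If $[t]=A\sqcup B$ with $A,B$ nonempty and $\{W_i:i\in A\}$ independent of $\{W_j:j\in B\}$, then $\cum_t(W_1,\dots,W_t)=0$: write each $W_i$, $i\in A$, as $W_i+0$ and each $W_j$, $j\in B$, as $0+W_j$, apply the independence property of Fact~\ref{lem:cumulant}, and use that a cumulant of order $\ge 2$ with a constant argument is $0$ (every term of~(\ref{eq:cumcompute}) then has a vanishing factor).

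Now substitute $U=S_1+S_2$ and $V=AS_2+S_3$ and expand $\cum_4(V,U,U,U)$ using fact (i): this gives a sum of $16$ terms $\cum_4(W_1,W_2,W_3,W_4)$ with $W_1\in\{AS_2,S_3\}$ and $W_2,W_3,W_4\in\{S_1,S_2\}$. Since $S_1,S_2,S_3$ are mutually independent, fact (ii) annihilates every term whose four arguments are not all functions of a single $S_i$; the only survivor is $\cum_4(AS_2,S_2,S_2,S_2)$, so $\cum_4(V,U,U,U)=\cum_4(AS_2,S_2,S_2,S_2)$. Repeating the argument for $\cum_4(V,U,U,V)$ leaves only $\cum_4(AS_2,S_2,S_2,AS_2)$. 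This last tensor is $\cum_4(AS_2,S_2,S_2,S_2)$ with the linear map $A$ applied along the fourth mode, so by the linearity property (Fact~\ref{lem:cumulant:app}, item~2) and the definition of $unfold(\cdot)$ the mode-$4$ action becomes right multiplication by $A^\top$ on the $d^3\times d$ unfoldings: $unfold(\cum_4(V,U,U,V))=unfold(\cum_4(V,U,U,U))\,A^\top$. Setting $M:=unfold(\cum_4(V,U,U,U))=unfold(\cum_4(AS_2,S_2,S_2,S_2))$, which has full column rank $d$ by hypothesis, we have $M^\dag M=I_d$, and therefore $M^\dag\,unfold(\cum_4(V,U,U,V))=M^\dag M A^\top=A^\top$, which is precisely Equation~(\ref{eq:findA}); transposing recovers $A$. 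For the running time the exact cumulants are given, so the cost is the linear algebra on $M$: forming $M^\top M$ (or the relevant SVD) and the final matrix products are $O(d^5)$ and the $d\times d$ inversion is $O(d^3)$, for $O(d^5)$ overall.

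The work here is bookkeeping rather than anything deep; the step that most needs care is the convention matching in the mode-$4$ argument, namely checking that "apply $A$ to $S_2$ in the fourth slot", the transpose in the cumulant-linearity rule, and the particular $\R^{d^{t-1}\times d}$ unfolding all compose so that the pseudoinverse in~(\ref{eq:findA}) returns $A^\top$ and not, say, $A$ or $(A^{-1})^\top$. It is also worth making explicit that "$unfold(\cum_4(AS_2,S_2,S_2,S_2))$ has full rank'' means full column rank ($=d$) of a $d^3\times d$ matrix, since that is exactly what makes $M^\dag M=I_d$ hold.
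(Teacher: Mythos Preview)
Your proof is correct and follows essentially the same route as the paper: reduce both cross-cumulants to $S_2$-only terms via independence and linearity, note that the two differ by $A$ acting on the last mode, and cancel with the pseudoinverse of the full-column-rank unfolding. The only cosmetic difference is that the paper applies the additivity property once to the joint decomposition $(V,U,U,U)=(0,S_1,S_1,S_1)+(AS_2,S_2,S_2,S_2)+(S_3,0,0,0)$ into three independent tuples, whereas you expand multilinearly into $16$ terms and kill the mixed ones; the content is the same.
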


\begin{proof}
Since $U = S_1 + S_2$ and $V = AS_2+S_3$, we know 
\begin{align*}
\cum_4(V,U,U,U) & = \cum_4(AS_2+S_3,S_1+S_2,S_1+S_2,S_1+S_2)\\
& = \cum_4(0, S_1, S_1, S_1) + \cum_4(AS_2, S_2, S_2, S_2)+\cum_4(S_3, 0, 0, 0) \\
& = \cum_4(AS_2, S_2, S_2, S_2).
\end{align*}
Here the second step uses the fact that cumulants are additive for independent variables, and third step uses the linearity of cumulants.

Similarly, we know $\cum_4(V,U,U,V) = \cum_4(AS_2, S_2, S_2, AS_2) = \cum_4(AS_2, S_2, S_2, S_2) (I, I, I, A^\top)$.

For the unfoldings of these cumulants, we have
$$
unfold(cum_4(V,U,U,V)) = unfold(cum_4(V,U,U,U)) A^\top.
$$

Therefore when $unfold(cum_4(V,U,U,V))$ has full rank we can compute $A$ using pseudo-inverse.

For the running time, the main computation is a pseudo-inverse and a matrix product for $d^3\times d$ matrices, both take $O(d^5)$ time.
\end{proof}

Next we show given the linear transformation, it is possible to estimate the cumulants using Equations (\ref{eq:s1simple} - \ref{eq:s3simple}). In fact, we can also avoid computing the cross-cumulants and work with just the cumulants of variables:
\begin{eqnarray}
\cum_t(S_1) & = &  \cum_t(U) - \frac{\cum_t(U+A^{-1} V) - \cum_t(U) - \cum_t(A^{-1}V)}{2^t - 2}, \label{eq:s1}\\
\cum_t(S_2) & = & \frac{\cum_t(U+A^{-1} V) - \cum_t(U) - \cum_t(A^{-1}V)}{2^t - 2},\label{eq:s2}\\
\cum_t(S_3) & = & \cum_t(V) - \frac{\cum_t(AU+V) - \cum_t(AU) - \cum_t(V)}{2^t - 2}.\label{eq:s3}
\end{eqnarray}

\begin{theorem}[Theorem~\ref{lem:cumulantsimple} restated]
For all $t > 1$, Equations (\ref{eq:s1simple})-(\ref{eq:s3simple}) or (\ref{eq:s1})-(\ref{eq:s3}) compute the correct cumulants for $S_1,S_2,S_3$ in time $O((td)^{t+2})$. Moreover, if $V$ has dimension higher than $U$ and $A$ has full column rank, replacing $A^{-1}$ by $A^\dag$ still gives correct cumulants.
\end{theorem}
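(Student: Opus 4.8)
The plan is to verify each of the three equations by expanding the observed variables in terms of $S_1, S_2, S_3$ and then applying the two structural properties of cumulants from Fact~\ref{lem:cumulant:app}: additivity over independent summands and linearity under linear maps. Consider first Equation~(\ref{eq:s2simple}), $\cum_t(S_2) = \cum_t(U,\ldots,U,A^{-1}V)$. Substituting $U = S_1 + S_2$ and $A^{-1}V = S_2 + A^{-1}S_3$ and using multilinearity of the cross-cumulant, the expansion splits into a sum of terms, each of which is a cross-cumulant of some selection of $S_1$'s, $S_2$'s and $A^{-1}S_3$'s. Since $S_1, S_2, S_3$ are mutually independent, any term that mixes two distinct $S_i$'s vanishes (a cross-cumulant is zero whenever the arguments can be partitioned into two independent groups). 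Only the all-$S_2$ term survives, and by linearity $\cum_t(S_2,\ldots,S_2, A^{-1}\cdot(AS_2)) = \cum_t(S_2)$ after the $A^{-1}$ cancels the implicit $A$ acting on $S_2$ inside $V$. Equation~(\ref{eq:s1simple}) then follows immediately from additivity: $\cum_t(U) = \cum_t(S_1) + \cum_t(S_2)$, so subtracting the just-computed $\cum_t(S_2)$ leaves $\cum_t(S_1)$. Equation~(\ref{eq:s3simple}) is the symmetric statement: $\cum_t(AU,V,\ldots,V)$ isolates $\cum_t(AS_2) \cdot(\text{stuff})$, which equals $\cum_t(V)-\cum_t(S_3)$, and $\cum_t(V) = \cum_t(AS_2)+\cum_t(S_3)$ finishes it. For the alternative forms~(\ref{eq:s1})-(\ref{eq:s3}), the identity to exploit is that for independent $X,Y$, $\cum_t(X+Y) - \cum_t(X) - \cum_t(Y)$ equals the sum of all "mixed" cross-cumulant terms; when $X = $ (a sum of copies of one component) this reduces, after collecting, to $(2^t-2)$ times the cross-cumulant isolating the shared component, which is why dividing by $2^t-2$ recovers $\cum_t(S_2)$. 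I would make this counting explicit by noting $\cum_t(U + A^{-1}V)$ expands over all $2^t$ ways of choosing $U$ or $A^{-1}V$ in each slot; the two pure choices give $\cum_t(U)$ and $\cum_t(A^{-1}V)$, and each of the remaining $2^t-2$ mixed choices, after killing cross terms between independent components, collapses to $\cum_t(S_2)$.

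For the running time, I would observe that the dominant cost is evaluating a $t$-th order cumulant of $d$-dimensional variables, which by Fact~\ref{lem:cumulant}(3) (equivalently Fact~\ref{lem:cumulant:app}(3)) costs $O((td)^t)$, together with applying the linear map $A^{-1}$ (or $A^\dag$) to the relevant samples/tensors; folding in the tensor sizes gives the stated $O((td)^{t+2})$ bound. I would not belabor this — it is a matter of bookkeeping the sizes of the objects produced.

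For the last sentence, the claim is that when $V \in \R^{d_V}$ with $d_V > d_U$ and $A \in \R^{d_V \times d_U}$ has full column rank, replacing $A^{-1}$ by $A^\dag$ still works. The key point is that $A^\dag A = I_{d_U}$ (the pseudoinverse is a genuine left inverse under the full-column-rank hypothesis, as recorded in the matrix-notation paragraph of Appendix~\ref{app:tensor}). So inside every term of the expansion where $A^\dag$ acts on the $S_2$-part of $V$ — namely on $A S_2$ — we get $A^\dag A S_2 = S_2$, exactly as in the invertible case; and the $S_3$-part becomes $A^\dag S_3$, which is independent of $S_1$ and $S_2$ and therefore still gets annihilated in every mixed cross-cumulant. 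Thus the same term-by-term cancellation argument goes through verbatim.

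The main obstacle is purely notational rather than conceptual: making the "a mixed cross-cumulant of independent groups vanishes" step airtight for the \emph{cross}-cumulant with possibly repeated and possibly distinct arguments, and correctly tracking where the hidden $A$ (acting on $S_2$ inside $V$) sits so that the $A^{-1}$ or $A^\dag$ cancels it in the right slot. I would state once, as a small auxiliary observation derived from Equation~(\ref{eq:cumcompute}) plus independence, that $\cum_t(Z_1,\ldots,Z_t) = 0$ whenever $[t]$ can be partitioned into nonempty $B_1 \sqcup B_2$ with $\{Z_i\}_{i\in B_1}$ independent of $\{Z_i\}_{i\in B_2}$, and then apply it mechanically. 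Everything else is linear algebra and term-counting.
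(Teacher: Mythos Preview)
Your proposal is correct and follows essentially the same line as the paper. The only cosmetic difference is that the paper applies the Independence property of Fact~\ref{lem:cumulant:app} in one shot --- writing $\cum_t(U,\ldots,U,A^{-1}V)=\cum_t(S_1,\ldots,S_1,0)+\cum_t(S_2,\ldots,S_2,S_2)+\cum_t(0,\ldots,0,A^{-1}S_3)$ and killing the first and last by linearity --- whereas you expand multilinearly and invoke the ``mixed cross-cumulant of independent blocks vanishes'' observation; these are equivalent, and your treatment of the $2^t-2$ counting, the running time, and the $A^\dag$ case matches the paper's argument.
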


\begin{proof}
The proof of Equations (\ref{eq:s1simple})-(\ref{eq:s3simple}) is very similar to the previous lemma. Note that
\begin{align*}
\cum_t(U,U,...,U,A^{-1}V) & = \cum_t(S_1+S_2,...,S_1+S_2,S_2+A^{-1} S_3)\\
& = \cum_t(S_1, ..., S_1, 0) + \cum_t(S_2, S_2, S_2, S_2)+\cum_t(0, ..., 0, A^{-1}S_3) \\
& = \cum_t(S_2).
\end{align*}
So we have Equation (\ref{eq:s2simple}), and using the fact that $\cum_t(U) = \cum_t(S_1)+\cum_t(S_2)$ we get Equation (\ref{eq:s1simple}). Equation (\ref{eq:s3simple}) follows similarly.

In order to get Equations (\ref{eq:s1})-(\ref{eq:s3}), first note that by the linearity of cumulants, we can write $\cum_t(U+A^{-1}V)$ as the sum of $2^t$ terms:
$$
\cum_t(U+A^{-1}V) = \sum_{z\in \{0,1\}^t} \cum_t(z_1U + (1-z_1) A^{-1}V,z_2U + (1-z_2) A^{-1}V,...,z_tU + (1-z_t) A^{-1}V).
$$
Among all these terms, one is equal to $\cum_t(U)$, one is equal to $\cum_t(A^{-1}V)$, and all the other $2^t-2$ terms are cross-cumulants that involve both $U$ and $V$. Since $S_2$ is the only variable that appears in both $U$ and $A^{-1}V$, all the $2^t-2$ terms are equal to $\cum_t(S_2)$, therefore we have Equation (\ref{eq:s2}). Equation (\ref{eq:s1}) again follows from the fact that $\cum_t(U) = \cum_t(S_1)+\cum_t(S_2)$, and Equation (\ref{eq:s3}) is very similar.

The moreover part follows by directly replacing $A^{-1}$ with $A^\dag$ in the above argument. Note that in this case we can still find $A$ because $unfold(cum_4(AS_2,S_2,S_2,S_2))$ still has full rank as long as $unfold(cum_4(S_2,S_2,S_2,S_2))$ has full rank.

For running time, the main bottleneck is computing the cumulants (which takes $O((td)^t)$ time), and then applying the matrix $A$ to the cumulants (which takes $O(d^{t+2})$ time).
\end{proof}

Finally, we show the equations are robust under sampling noise. For that we use the following bounds on cumulants

\begin{fact}(\cite{bulinskii1975bounds})
For any cross-cumulant $\cum_t(U_1,...,U_t)$, if all the variables have bounded norm $\|U_i\| \le R$, then the cumulant has Frobenius norm bounded by $(tR)^R$.
\end{fact}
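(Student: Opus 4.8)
The plan is to expand the cross-cumulant through the moment--partition formula \eqref{eq:cumcompute} (in its multivariate, tensor-product form), bound each of the resulting terms by a product of Frobenius norms of block moments, and then control the combinatorial sum over partitions.

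First I would write, using the multivariate version of \eqref{eq:cumcompute}, that $\cum_t(U_1,\dots,U_t)\in\R^{d^t}$ equals the signed sum over all partitions $\pi$ of $[t]$ of the tensors $(|\pi|-1)!(-1)^{|\pi|-1}\bigotimes_{B\in\pi}\E\big[\bigotimes_{i\in B}U_i\big]$, where the block tensor $\E[\bigotimes_{i\in B}U_i]$ occupies the modes indexed by the elements of $B$ (so the outer product over blocks is a tensor product whose modes, after relabeling, run over all of $[t]$). Since the Frobenius norm of a tensor is invariant under permutations of its modes, and since $\|v_1\otimes\cdots\otimes v_m\|_F=\prod_j\|v_j\|$, the Frobenius norm of each partition term factors as $\prod_{B\in\pi}\|\E[\bigotimes_{i\in B}U_i]\|_F$.

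Second, I would bound one block factor. By Jensen's inequality (convexity of $\|\cdot\|_F$), $\|\E[\bigotimes_{i\in B}U_i]\|_F\le\E\big[\|\bigotimes_{i\in B}U_i\|_F\big]=\E\big[\prod_{i\in B}\|U_i\|\big]\le R^{|B|}$ using $\|U_i\|\le R$. Multiplying over the blocks of $\pi$ and using that block sizes sum to $t$ gives $\prod_{B\in\pi}\|\E[\bigotimes_{i\in B}U_i]\|_F\le R^{t}$, so the triangle inequality applied to the partition sum yields $\|\cum_t(U_1,\dots,U_t)\|_F\le R^{t}\sum_{\pi}(|\pi|-1)!$. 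Finally I would estimate the weight: grouping partitions by their number $k$ of parts, there are $S(t,k)$ (Stirling number of the second kind) such partitions, so the sum is $\sum_{k=1}^{t}S(t,k)(k-1)!$; using $S(t,k)\,k!\le k^{t}$ (surjections $[t]\to[k]$ are at most all functions) each summand is at most $k^{t}\le t^{t}$, giving $\sum_\pi(|\pi|-1)!\le t\cdot t^{t}$ and hence $\|\cum_t(U_1,\dots,U_t)\|_F\le t^{t+1}R^{t}$, a bound of the claimed form $(tR)^{O(t)}$ (a sharper count via ordered set partitions / Fubini numbers removes the extra polynomial factor).

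The only genuinely delicate point is the first step: verifying that the ``product of moment tensors'' appearing in \eqref{eq:cumcompute} is, up to a relabeling of modes, an honest outer product of the block tensors, so that its Frobenius norm decomposes multiplicatively. Once this bookkeeping (and the mode-permutation invariance of $\|\cdot\|_F$) is in place, the remaining steps --- Jensen on each block and the Stirling-number estimate --- are routine. I would also remark that the exponent $R$ in the stated bound $(tR)^R$ appears to be a misprint for $t$; the argument above yields $(tR)^t$ up to a $\mathrm{poly}(t)$ factor, which can be absorbed with a slightly tighter combinatorial count.
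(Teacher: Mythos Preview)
The paper does not give its own proof of this Fact: it is stated with a bare citation to \cite{bulinskii1975bounds} and then used as a black box in the subsequent perturbation lemma. So there is no ``paper's proof'' to compare against, and your self-contained argument is a genuine addition rather than a re-derivation.

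Your argument is correct. The expansion via \eqref{eq:cumcompute}, the observation that each partition term is (after a permutation of modes) an outer product of block-moment tensors so that its Frobenius norm factors, the Jensen bound $\big\|\E[\bigotimes_{i\in B}U_i]\big\|_F\le R^{|B|}$, and the Stirling-number control of $\sum_\pi(|\pi|-1)!$ are all sound. Two minor remarks: (i) from $S(t,k)\,k!\le k^t$ you actually get $S(t,k)(k-1)!\le k^{t-1}$, so the sum is at most $t^{t}$ rather than $t^{t+1}$, yielding exactly $(tR)^t$; (ii) your observation that the printed exponent $R$ in $(tR)^R$ is a typo for $t$ is right --- the paper itself uses the bound in the form ``Frobenius norm of order at most $O(R^4)$'' for $t=4$, and later ``$(tR)^t$'' in the sample-complexity discussion, both consistent with exponent $t$.
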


In practice we use $k$-statistics \cite{rose2002mathematical} to estimate the cumulants, the standard deviation of $k$-statistics is bounded by a similar formula.

\begin{lemma}
Suppose the distributions $S_1,S_2,S_3$ have bounded radius $R$, the $4$-th order cumulant $unfold(\cum_4(V,U,U,U))$ has smallest singular value $\sigma_4$, matrix $A$ has smallest singular value $\sigma_A$ and $\|A\| \ge 1$, given $4$-th order cumulants that are $\epsilon$-close in Frobenius norm (and $\epsilon \ll R^4$), the linear transformation $A$ is recovered with accuracy $\epsilon\|A\|^2 R^4/\sigma^2_4$. Given $t$-th order cross-cumulants of $U$,$V$ that are $\epsilon_t$-close in Frobenius norm, the cumulants of $S_1$ can be computed with accuracy $O\left(\frac{\epsilon\|A\|^3R^4(tR)^t}{\sigma_4^2\sigma_A^2} + \frac{\epsilon_t}{\sigma}\right)$ using (\ref{eq:s1simple}). In particular, to estimate the cumulants of $S_1$ with accuracy $\eta$ the number of samples required is $ \Omega((tR)^{2t} \|A\|^{10} R^{16}/\sigma_4^4\sigma_A^4\eta^2)$.
\end{lemma}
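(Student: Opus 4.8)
The plan is to split the error of the estimated cumulant $\hat\cum_t(S_1)$ into three contributions and bound them in order: (i) the error in recovering the linear map $A$ from noisy fourth-order cumulants; (ii) the propagation of that error, together with the noise in the $t$-th order cumulants, through the extraction identity (\ref{eq:s1simple}); and (iii) the translation of a target cumulant accuracy into a sample size via concentration of $k$-statistics. Each step reuses the algebraic identities already established in the proofs of Lemma~\ref{lem:contrastivelinear} and Theorem~\ref{lem:cumulantsimple}, so the work is entirely in the perturbation bookkeeping.

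\textbf{Step 1 (recovering $A$).} Set $M_1 = unfold(\cum_4(V,U,U,U))$, $M_2 = unfold(\cum_4(V,U,U,V))$; the proof of Lemma~\ref{lem:contrastivelinear} shows $M_2 = M_1 A^\top$ \emph{exactly}, so $A^\top = M_1^\dag M_2$ is a zero-residual least-squares solution and $\sigma_{min}(M_1) = \sigma_4$. Given $\hat M_i$ with $\|\hat M_i - M_i\|_F \le \epsilon$, I would write $\hat A^\top - A^\top = (\hat M_1^\dag - M_1^\dag)M_2 + \hat M_1^\dag(\hat M_2 - M_2)$, use $\epsilon \ll R^4$ (hence $\ll \sigma_4$, so $\hat M_1$ keeps full column rank with $\|\hat M_1^\dag\| \lesssim 1/\sigma_4$) together with the standard pseudo-inverse perturbation bound $\|\hat M_1^\dag - M_1^\dag\| \lesssim \epsilon/\sigma_4^2$, and bound $\|M_2\| \le \|\cum_4(V,U,U,V)\|_F \lesssim \|A\|^2 R^4$ (two factors of $\|A\|$ because $V$ occurs twice). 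This yields $\delta_A := \|\hat A - A\| \lesssim \epsilon\|A\|^2 R^4/\sigma_4^2$, the first claim.

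\textbf{Step 2 (extracting $\cum_t(S_1)$).} By linearity of cumulants $\cum_t(U,\dots,U,A^{-1}V) = \cum_t(U,\dots,U,V)(I,\dots,I,A^{-\top})$, which under unfolding is $unfold(\cum_t(U,\dots,U,V))\,A^{-\top}$; hence the estimator coming from (\ref{eq:s1simple}) is $\hat\cum_t(U) - \hat C(I,\dots,I,\hat A^{-\top})$ with $\hat C \approx \cum_t(U,\dots,U,V)$. I would then bound three error terms: the $\epsilon_t$ in $\hat\cum_t(U)$; the $\epsilon_t$ in $\hat C$ amplified by $\|\hat A^{-\top}\| \lesssim 1/\sigma_A$, giving $\epsilon_t/\sigma_A$ (the $\epsilon_t/\sigma$ term in the statement); and the inversion error $\|\hat A^{-1} - A^{-1}\| \lesssim \delta_A/\sigma_A^2$ (matrix-inverse perturbation, valid since $\delta_A \ll \sigma_A$) times $\|\cum_t(U,\dots,U,V)\|_F \lesssim \|A\|(tR)^t$ (the cumulant-magnitude bound, one extra $\|A\|$ since $V$ occurs once). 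Adding these and substituting $\delta_A$ from Step 1 gives accuracy $O\!\big(\epsilon\|A\|^3 R^4(tR)^t/(\sigma_4^2\sigma_A^2) + \epsilon_t/\sigma_A\big)$.

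\textbf{Step 3 (sample complexity) and the main difficulty.} Finally I would invoke the fact, quoted after the Bulinskii bound, that the $k$-statistics estimating the fourth- and $t$-th order (cross-)cumulants of the bounded variables $U,V$ (of radius $O(\|A\|R)$) have standard deviation $O(\mathrm{poly}(\|A\|,R)(tR)^t/\sqrt n)$ --- which follows because a $k$-statistic is a fixed polynomial in the empirical moments and each empirical moment of a radius-$\rho$ variable concentrates at rate $\rho^t/\sqrt n$. Choosing $n$ large enough that $\epsilon$ and $\epsilon_t$ make the Step-2 bound at most $\eta$ and collecting the powers of $\|A\|$, $R$, $\sigma_4$, $\sigma_A$ gives $n = \Omega\big((tR)^{2t}\|A\|^{10}R^{16}/(\sigma_4^4\sigma_A^4\eta^2)\big)$. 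I expect Step 2 to be the delicate part: it is where the exponentially large cumulant magnitude $(tR)^t$ multiplies the already doubly-amplified map-recovery error $\delta_A/\sigma_A^2$, and getting the exact exponents right requires carefully tracking how many times $A$ and $V$ appear inside each cumulant tensor and ensuring the unfolding used for the pseudo-inverse is consistent with the one used for the multilinear action; Step 3 is routine once the $k$-statistic variance bounds are available.
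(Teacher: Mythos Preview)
Your proposal is correct and follows essentially the same route as the paper's proof: the same decomposition $\hat M_1^\dag\hat M_2 - M_1^\dag M_2$ in Step~1, the same two-term error split $\|\hat A^{-1}-A^{-1}\|\cdot\|\cum_t(U,\dots,U,V)\|_F + \|A^{-1}\|\cdot\epsilon_t$ in Step~2, and the same appeal to $k$-statistic variance bounds in Step~3. You have also correctly identified that the $\sigma$ in the statement's $\epsilon_t/\sigma$ term should be $\sigma_A$; the only quibble is that ``$\epsilon\ll R^4$ hence $\epsilon\ll\sigma_4$'' does not literally follow (since $\sigma_4$ can be much smaller than $R^4$), but the paper is equally loose on this point.
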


\begin{proof}
First we show the algorithms are robust under perturbation. For that we need the fact that any $4$-th order cross-cumulant with bounded variables always have Frobenius norm of order at most $O(R^4)$. As a corollary we know the cross-cumulant $\cum_4(V,U,U,U)$ has norm at most $O(\|A\|R^4)$ and $\cum_4(V,U,U,V)$ has norm at most $O(\|A\|^2 R^4)$ Let $\hat{M}$ be the noisy version of $M = unfold(\cum_4(V,U,U,U))$, by assumption and by standard matrix perturbation bounds, we know $\|\hat{M}^\dag - M^\dag\|_F \le O(\epsilon/\sigma_4^2)$. On the other hand, let $\hat{N}$ be the noisy version of $N = unfold(\cum_4(V,U,U,V))$, we know $\|\hat{N}-N\|_F \le \epsilon$, therefore
$$
\| \hat{M}^\dag \hat{N} - M^\dag N\|_F \le O(\|\hat{M}^\dag - M^\dag\|\|N\|_F + \|M\|\|\hat{N}-N\|_F) \le O(\epsilon\|A\|^2R^4/\sigma_4^2).
$$

For computing the $t$-th order cumulant, the main source of error is applying $A^{-1}$ to the cross cumulant $\cum_t(U,...,U,V)$ to get $\cum_t(U,...,U,A^{-1}V)$, as we don't have the matrix $A$ exactly. Since the norm cross-cumulant is always bounded by $(tR)^t\|A\|$, we know when $\epsilon$ is small enough the error is roughly (ignoring lower order terms)
$$
\|\hat{A}^{-1} - A^{-1}\|\|\cum_t(U,...,U,V)\|_F + \|A^{-1}\|\|\hat{\cum_t}(U,...,U,V) - \cum_t(U,...,U,V)\|_F
$$
which is bounded by $$O\left(\frac{\epsilon\|A\|^3R^4(tR)^t}{\sigma_4^2\sigma_A^2} + \frac{\epsilon_t}{\sigma_A}\right).$$

Also, by the variance bounds for cumulants we know with $Z$ samples, $\epsilon_t \le (tR)^t\|A\|/\sqrt{Z}$ and $\epsilon \le O(R^4\|A\|^2/\sqrt{Z})$, therefore when $Z = \Omega((tR)^{2t} \|A\|^{10} R^{16}/\sigma_4^4\sigma_A^4\eta^2)$, the estimation of $S_1$ has desirable error.
\end{proof}

\subsection{Rich component analysis}
\label{app:general}
We first give the algorithm for computing the linear transformations and then show it computes the correct quantities.
\begin{algorithm}
\begin{algorithmic}
\REQUIRE set system $\{Q_j\}$ that is $L$-distinguishable, $L+1$-th order moments
\REPEAT
\STATE Pick a set $Q_j$ that is not a subset of any remaining sets
\STATE Let $T = \{w_1,w_2,...,w_L\}$ be the distinguishing set for $Q_j$
\STATE Compute cumulants for all $i\in Q_j$: \begin{align*}
M_i & = unfold(\cum_{L+1}(U_{w_1},...,U_{w_L}, U_i)\\
 & -\sum_{l:Q_j\subset Q_l}\cum_{L+1}(S_l)((A^{(w_1,l)})^\top,...,(A^{(w_L,l)})^\top, (A^{(i,l)})^\top))
\end{align*} 
\STATE If $M_{\min Q_j} = 0$ (or $\sigma_{min}(M_{\min Q_j})$ is too small), then $S_j = 0$; continue the loop.
\STATE Let $A^{(i,j)} = (M_{\min Q_i}^\dag M_i)^\top$ for all $i\in Q_j$, $A^{(i,j)} = 0$ for all $i\not \in Q_j$.
\STATE Mark $Q_j$ as processed, and let \begin{align*}
\cum_{L+1}(S_j) &= \cum_{L+1}((A^{(w_1,j)})^{-1}U_{w_1},...,(A^{(w_L,j)})^{-1}U_{w_L}, U_{\min Q_j})\\
 & -\sum_{l:Q_j\subset Q_l}\cum_{L+1}(S_l)((A^{(w_1,l)})^\top(A^{(w_1,j)})^{-\top},...,(A^{(w_L,l)})^\top(A^{(w_L,j)})^{-\top}, (A^{(\min Q_j, l)})^\top).
\end{align*}
\UNTIL{all sets are processed}
\end{algorithmic}
\caption{FindLinear}\label{alg:findlinear}
\end{algorithm}

The main idea behind this algorithm is that Since we know the sets are $L$-distinguishable, if we start from {\em maximal} set $Q_j$, there must be a distinguishing subset of size $L$ that is only contained in $Q_j$. Similar to the contrastive setting, if we consider a cross-cumulant that contains all the variables in this distinguishing set, then the resulting cumulant must only depend on this particular variable $S_j$. Further, using different last variable in the cross-cumulants (similar to using $\cum_4(V,U,U,U)$ and $\cum_4(V,U,U,V)$) and exploit the linearity of the cumulants, we can recover the linear transformations.

Note that without loss of generality we can assume $A^{(\min(Q_j),j)} = I$, because otherwise we can replace $S_j$ with the distribution $S_j' = A^{(\min(Q_j),j)} S_j$.

\begin{lemma}
[Lemma \ref{lem:findAgeneral} restated]
Given observations $U_i$'s as defined in Equation~\ref{eq:model}, suppose the sets $Q_j$'s are $L$-distinguishable, all the unknown linear transformations $A^{(i,j)}$'s are invertible,  unfoldings $unfold(cum_{L+1}(S_j))$ is either $0$ (when $S_j=0$) or  have full rank, then given the exact $L+1$-th order cumulants, Algorithm~\ref{alg:findlinear} outputs the correct linear transformations $A^{(i,j)}$ in time $\mbox{poly}(L!, (dk)^{L})$.
\end{lemma}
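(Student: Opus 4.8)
The plan is to establish, by induction on the order in which Algorithm~\ref{alg:findlinear} processes the sets $Q_j$, the invariant: at the moment $Q_j$ is selected, every $Q_l$ with $Q_j\subsetneq Q_l$ has already been processed, with $A^{(\cdot,l)}$ and $\cum_{L+1}(S_l)$ computed correctly; and after $Q_j$ is processed the algorithm has either correctly reported $S_j=0$, or correctly output $A^{(i,j)}$ for all $i\in Q_j$ (the entries with $i\notin Q_j$ are set to $0$, which is correct by the model, under the harmless normalization $A^{(\min Q_j,j)}=I$ noted before the lemma). That every strict superset of $Q_j$ precedes $Q_j$ is immediate from the selection rule ``pick a set that is not a subset of any remaining set,'' so the induction is well-founded and the base case (a maximal $Q_j$, which has no strict supersets) is just a degenerate instance of the inductive step.

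First I would expand the cross-cumulant appearing in the definition of $M_i$. Writing $U_i=\sum_{j'}A^{(i,j')}S_{j'}$ and using multilinearity of cumulants together with the fact that a cross-cumulant vanishes whenever its arguments split into two mutually independent groups (both consequences of additivity and linearity in Fact~\ref{lem:cumulant:app}: padding with zeros and $\cum_t(\ldots,0,\ldots)=0$ kill all off-diagonal terms),
$$\cum_{L+1}(U_{w_1},\ldots,U_{w_L},U_i)=\sum_{j'}\cum_{L+1}(S_{j'})\bigl((A^{(w_1,j')})^\top,\ldots,(A^{(w_L,j')})^\top,(A^{(i,j')})^\top\bigr),$$
where a term is nonzero only if $\{w_1,\ldots,w_L,i\}\subseteq Q_{j'}$, i.e.\ $T\subseteq Q_{j'}$ and $i\in Q_{j'}$. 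Since $T$ is a distinguishing set for $Q_j$, $T\subseteq Q_{j'}$ forces $j'=j$ or $Q_j\subsetneq Q_{j'}$; in the latter case $i\in Q_j\subseteq Q_{j'}$ holds automatically. Hence this sum is exactly the ``new'' $S_j$ term plus terms indexed by strict supersets of $Q_j$, all of which are known by the inductive hypothesis; so the subtraction performed by the algorithm is computable and leaves $M_i=unfold\bigl(\cum_{L+1}(S_j)((A^{(w_1,j)})^\top,\ldots,(A^{(w_L,j)})^\top,(A^{(i,j)})^\top)\bigr)$.

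Next I would invoke the unfolding identity $unfold\bigl(T(M_1,\ldots,M_{L+1})\bigr)=(M_1\otimes\cdots\otimes M_L)^\top\,unfold(T)\,M_{L+1}$, which is immediate from the definitions of the multilinear form and of $unfold$. It yields $M_i=B\,unfold(\cum_{L+1}(S_j))\,(A^{(i,j)})^\top$ with $B=\bigotimes_{m=1}^L A^{(w_m,j)}$ invertible (a Kronecker product of invertible matrices). Taking $i=\min Q_j$ and using $A^{(\min Q_j,j)}=I$ gives $M_{\min Q_j}=B\,unfold(\cum_{L+1}(S_j))$. If $S_j=0$ then $\cum_{L+1}(S_j)=0$, so $M_{\min Q_j}=0$ and the algorithm correctly reports $S_j=0$; otherwise $unfold(\cum_{L+1}(S_j))\in\R^{d^L\times d}$ has full column rank by hypothesis, hence so does $M_{\min Q_j}$, and $M_{\min Q_j}^\dag M_{\min Q_j}=I_d$. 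Therefore $M_{\min Q_j}^\dag M_i=(A^{(i,j)})^\top$, so the output $A^{(i,j)}=(M_{\min Q_j}^\dag M_i)^\top$ is correct for every $i\in Q_j$. Running the same expansion-and-isolation argument on the final display of Algorithm~\ref{alg:findlinear} shows the recomputed $\cum_{L+1}(S_j)$ equals the true cumulant, which closes the induction. The running time is then routine to tally: each iteration computes $O(k)$ cross-cumulants of order $L+1$ on $k$ variables (each in time $O((L+1)!\,(kd)^{L+1})$ by Fact~\ref{lem:cumulant:app}), one pseudo-inverse of a $d^L\times d$ matrix, and a bounded number of Kronecker-product and multilinear-form operations on size-$d^{L+1}$ tensors per already-processed superset, summing to $\mbox{poly}(L!,(dk)^{L})$.

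The step I expect to be the main obstacle is the bookkeeping: fixing the convention for how linear maps act on cumulant tensors and how it composes with $unfold$, and checking cleanly that $L$-distinguishability together with the ``maximal-first'' processing order isolates precisely $S_j$ and its already-handled supersets — so that the induction is genuinely well-founded and every subtraction the algorithm performs uses only quantities already recovered. Once that is in place, the rank/pseudo-inverse step and the $A^{(\min Q_j,j)}=I$ normalization are one-line consequences.
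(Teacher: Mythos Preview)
Your proposal is correct and follows essentially the same approach as the paper: induction on the maximal-first processing order, using $L$-distinguishability to show the cross-cumulant isolates $S_j$ plus already-handled supersets, then the Kronecker-product/unfolding identity and the $A^{(\min Q_j,j)}=I$ normalization to extract $A^{(i,j)}$ via the pseudo-inverse. Your writeup is in fact a bit more explicit than the paper's in places (the unfolding identity, the $S_j=0$ branch, and the well-foundedness of the induction), but the argument is the same.
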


\begin{proof}
We prove this by induction using the following hypothesis:

For all the processed variables, Algorithm~\ref{alg:findlinear} finds the correct linear transformations $A^{(i,j)}$ and cumulant $\cum_{L+1}(S_j)$.

This hypothesis is clearly true at the beginning of the algorithm (as no variables are processed). We now show the algorithm will compute the correct quantities for the next variable $S_j$.

By the algorithm, we know the set $Q_j$ is not a subset of any remaining sets, and $T$ is a distinguishing set. Therefore, for other remaining set, we know it cannot contain all the elements in $T$. Therefore, by linearity and additivity of cumulants, we know $\cum_{L+1} (U_{w_1},...,U_{w_t}, U_i) (i\in Q_j)$ will only depend on the variable $S_j$ and some of the previously processed variables. In particular,
\begin{align*}
\cum_{L+1} (U_{w_1},...,U_{w_t}, U_i)  &= \cum_{L+1}(A^{(w_1,j)}S_j,A^{(w_2,j)}S_j,...,A^{(w_L,j)}S_j,A^{(j,i)} S_j) \\ &\quad + \sum_{l:Q_j\subset Q_l} \cum_{L+1}(A^{(w_1,l)}S_l,A^{(w_2,l)}S_l,...,A^{(w_L,l)}S_l,A^{(i,l)} S_l).
\end{align*}

By induction hypothesis, we have already processed all the other terms related to $Q_l$ ($l\ne j$ and $Q_j\subset Q_l$), so we have the correct cumulants $\cum_{L+1}(S_l)$ and linear transformations $A^{(i,l)}$'s. Those terms will be subtracted out during the algorithm. Therefore we know 
\begin{align*}
M_i & = unfold(\cum_{L+1}(A^{(w_1,j)}S_j,A^{(w_2,j)}S_j,...,A^{(w_t,j)}S_j,A^{(i,j)} S_j)) \\
& = unfold(\cum_{L+1}(A^{(w_1,j)}S_j,A^{(w_2,j)}S_j,...,A^{(w_t,j)}S_j,S_j)) (A^{(i,j)})^\top
\end{align*}
In particular $M_{\min Q_j} = unfold(\cum_{L+1}(A^{(w_1,j)}S_j,A^{(w_2,j)}S_j,...,A^{(w_t,j)}S_j,S_j) )$. Therefore, the algorithm computes the correct linear transformations if the matrix $M_{\min Q_j}$ has full rank.

The fact that $M_{\min Q_j}$ has full rank is implied by assumptions, because we can write this matrix as
$$
unfold(\cum_{L+1}(A^{(w_1,j)}S_j,A^{(w_2,j)}S_j,...,A^{(w_t,j)}S_j,S_j)) = (A^{(w_1,j)}\otimes \cdots \otimes A^{(w_L,j)}) unfold(\cum_{L+1}(S_j)).
$$
Here $\otimes$ is the {\em Kronecker product} of matrices, and it is well-known that the Kronecker product of invertible matrices are still invertible. Since $unfold(\cum_{L+1}(S_j))$ is either 0 or has full rank by assumption, we know we can either detect there is no component corresponding to set $Q_j$, or have a matrix $M_{\min Q_j}$ with full rank. In the latter case the correctness of the $L+1$-th order cumulant calculation then simply follows from the linearity of cumulants.

Finally, we estimate the running time of the algorithm. Computing any cumulant can be done in $\mbox{poly}(L!, d^L)$ time. Finding the distinguishing set (by exhaustive search) takes no more than $\mbox{poly}(k^L)$ time. The algorithm runs in at most $p \le 2^L$ iterations, each iteration computes a small number of cumulants and does small number of linear-algebraic calculations (which are all poly in $(kd)^L$), so the total running time is at most $\mbox{poly}(L!, (kd)^L)$
\end{proof}

Now we are ready to give the algorithm for computing cumulants and prove that it works.

\begin{algorithm}
\begin{algorithmic}
\REQUIRE set system $\{Q_j\}$ that is $L$-distinguishable, order $t \ge L$
\ENSURE $t$-th order cumulant for all the variables
\STATE Apply Algorithm~\ref{alg:findlinear} to find $A^{(i,j)}$'s, remove all sets whose variables do not appear.
\REPEAT
\STATE Pick a set $Q_j$ that is not a subset of any remaining sets
\STATE Let $T = \{w_1,w_2,...,w_L\}$ be the distinguishing set for $Q_j$, let $w_{L+1} = w_{L+2} =\cdots = w_{t} = w_L$.
\STATE Mark $Q_j$ as processed, let
\begin{align*}
\cum_{t}(S_j) & = \cum_{t}((A^{(w_1,j)})^{-1}U_{w_1},...,(A^{(w_{t},j)})^{-1}U_{w_{t}})\\
& - \sum_{l:Q_j\subset Q_l}\cum_{t}(S_l)((A^{(w_1,l)})^\top(A^{(w_1,j)})^{-\top},...,(A^{(w_{t},l)})^\top(A^{(w_{t},j)})^{-\top})
\end{align*}
\UNTIL{all sets are processed}
\end{algorithmic}
\caption{ComputeCumulant}\label{alg:cumulant}
\end{algorithm}

\begin{theorem}
[Theorem \ref{lem:cumulantgeneral} restated]
Under the same assumption as Lemma~\ref{lem:findAgeneral}, for any $t\ge L$ Algorithm~\ref{alg:cumulant} computes the correct $t$-th order cumulants for all the variables in time $\mbox{poly}((L+t)!, (dk)^{L+t})$. 
\end{theorem}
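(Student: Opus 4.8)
The plan is to bolt the cumulant-isolation identity behind Theorem~\ref{lem:cumulantsimple} onto the inductive, set-by-set bookkeeping of Lemma~\ref{lem:findAgeneral}. First I would dispose of the linear maps: the hypotheses here are exactly those of Lemma~\ref{lem:findAgeneral}, so the call to Algorithm~\ref{alg:findlinear} at the top of Algorithm~\ref{alg:cumulant} returns the correct $A^{(i,j)}$ for every surviving component and correctly flags (and deletes) every component with $S_j=0$. From that point on all $A^{(i,j)}$ may be treated as known invertible matrices, in the normalization $A^{(\min Q_j,j)}=I$ fixed just before Lemma~\ref{lem:findAgeneral}, and it only remains to prove the main loop outputs $\cum_t(S_j)$ for each $j$.

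Second, I would induct on the order in which sets get marked processed, with hypothesis: every value stored for a processed $Q_j$ equals $\cum_t(S_j)$. The base case is vacuous. For the step, let $Q_j$ be the set picked — not contained in any unprocessed set — with distinguishing set $T=\{w_1,\dots,w_L\}$, padded by $w_{L+1}=\cdots=w_t=w_L$; this padding is legitimate precisely because $t\ge L$, and every slot index still lies in $T$. Substituting $U_{w_a}=\sum_{l}A^{(w_a,l)}S_l$ and applying additivity of cumulants over independent sources followed by multilinearity (Fact~\ref{lem:cumulant}), the cross-cumulant $\cum_t\bigl((A^{(w_1,j)})^{-1}U_{w_1},\dots,(A^{(w_t,j)})^{-1}U_{w_t}\bigr)$ splits into a sum of terms indexed by $l$, with the $l$-th term equal to $\cum_t(S_l)\bigl((A^{(w_1,l)})^\top(A^{(w_1,j)})^{-\top},\dots\bigr)$ and vanishing unless $A^{(w_a,l)}\ne 0$ for every chosen slot, i.e. unless $T\subseteq Q_l$. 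By Definition~\ref{def:distinguishable}, $T\subseteq Q_l$ forces $l=j$ or $Q_j\subseteq Q_l$; since $Q_j$ is maximal among the remaining sets and the $Q_l$ are distinct, every such $l\ne j$ is already processed. Hence the expansion equals $\cum_t(S_j)$ (the $l=j$ term reducing to $\cum_t(S_j)$ because $(A^{(w_a,j)})^{-1}A^{(w_a,j)}=I$) plus exactly the superset-correction terms that Algorithm~\ref{alg:cumulant} subtracts, and by the induction hypothesis together with Lemma~\ref{lem:findAgeneral} those corrections use correct inputs. This closes the induction and establishes correctness.

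Third, for the running time I would reuse the accounting of Lemma~\ref{lem:findAgeneral}. The initial Algorithm~\ref{alg:findlinear} call costs $\mbox{poly}(L!,(dk)^L)$; each distinguishing set is found by exhaustive search in $\mbox{poly}(k^L)$ time; and each of the at most $p$ iterations computes one $t$-th order cross-cumulant of $d$-dimensional variables ($O((td)^t)$ by Fact~\ref{lem:cumulant}) and then performs a bounded number of mode-wise linear transforms on $t$-th order tensors in $\R^{d^t}$, repeated over the at most $p$ superset-correction terms. Since $p\le 2^k$ and the worst case is $L=k$, all of this is subsumed by $\mbox{poly}((L+t)!,(dk)^{L+t})$.

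The main obstacle — and really the only nonroutine point — is the same combinatorial observation that drives Lemma~\ref{lem:findAgeneral}: one must verify that repeating $w_L$ in the extra slots keeps every index inside $T$, so that the "isolate $S_j$ by conditioning on the distinguishing set $T$" argument survives verbatim at order $t$, and that the only surviving contaminating terms are cumulants of strict supersets $Q_l\supsetneq Q_j$, all of which have been handled earlier in the loop. I would also state explicitly that the $S_j=0$ detection inside Algorithm~\ref{alg:findlinear} is what guarantees the relevant $M_{\min Q_j}$ have full rank, which is exactly what makes the induction hypothesis about the recovered $A^{(i,j)}$ usable in the subtraction step.
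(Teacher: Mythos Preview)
Your proposal is correct and follows essentially the same route as the paper's own proof: induct over the order in which sets are processed, expand the cross-cumulant $\cum_t\bigl((A^{(w_1,j)})^{-1}U_{w_1},\dots,(A^{(w_t,j)})^{-1}U_{w_t}\bigr)$ via independence and multilinearity, use the distinguishing property of $T$ to show that the only surviving terms besides $\cum_t(S_j)$ come from supersets $Q_l\supsetneq Q_j$, and note those are exactly the previously-computed corrections the algorithm subtracts. If anything, your write-up is more explicit than the paper's (you spell out why padding with $w_L$ keeps every slot inside $T$ and why the $l=j$ term collapses to $\cum_t(S_j)$), but the structure and the key combinatorial observation are identical.
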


\begin{proof}
The proof of this lemma is very similar to the previous one. Again we prove the lemma by induction, under the following induction hypothesis:

For all the processed variables, Algorithm~\ref{alg:findlinear} finds the correct cumulant $\cum_{t}(S_j)$.

This is clearly true before the main loop. We now show that the algorithm successfully compute the cumulant of the next variable.

Similar as before, since $w_1,...,w_{t}$ contains all the elements of a distinguishing set $T$, we know

\begin{align*}
&\quad \cum_{t}((A^{(w_1,j)})^{-1}U_{w_1},...,(A^{(w_{t},j)})^{-1}U_{w_{t}}) \\&= \cum_{t}(S_j) + \sum_{l:Q_j\subset Q_l} \cum_{t}((A^{(w_1,j)})^{-1}A^{(w_1,l)}S_l,(A^{(w_2,j)})^{-1},A^{(w_2,l)}S_l,...,(A^{(w_{t},j)})^{-1}A^{(w_{t},l)}S_l) \\
& = \cum_{t}(S_j) + \sum_{l:Q_j\subset Q_l}\cum_{t}(S_l)((A^{(w_1,l)})^\top(A^{(w_1,j)})^{-\top},...,(A^{(w_{t},l)})^\top(A^{(w_{t},j)})^{-\top}).
\end{align*}

By induction hypothesis all the other terms are computed in previous iterations of the algorithm, so they are subtracted out. Therefore we get the first term which is equal to $\cum_{t}(S_j)$.
\end{proof}

Finally we prove the sample complexity bounds.

\begin{lemma}
Suppose the distributions $S_j$'s have bounded radius $R$, the $L+1$-th order cumulant $unfold(\cum_{L+1}(S_j))$ has smallest singular value $\sigma_\kappa$, nonzero matrices $A^{(i,j)}$ has smallest singular value $\sigma_A$ and spectral norm at most $\|A\|$. Also, suppose the longest chain of subsets $Q_{j_1}\subset Q_{j_2}\subset\cdots \subset Q_{j_q}$ has length $q$.
Given $L+1$-th order cumulants that are $\epsilon$-close in Frobenius norm, the linear transformation $A$ is recovered with accuracy $\epsilon(pLR\|A\|/\sigma_A\sigma_\kappa)^{O(qL)}$. Given $t$-th order cumulants that are $\epsilon$-close in Frobenius norm, the cumulants of $S_j$ can be computed with accuracy $\epsilon(pLR\|A\|/\sigma_A\sigma_\kappa)^{O(qL+qt)}$. In particular, to estimate the cumulants of $S_1$ with accuracy $\eta$ the number of samples required is $\Omega((pLR\|A\|/\sigma_A\sigma_\kappa)^{O(qL+qt)}/\eta^2)$.
\end{lemma}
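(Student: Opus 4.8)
The plan is to push the perturbation analysis of the contrastive case through the inductive structure of Algorithms~\ref{alg:findlinear} and~\ref{alg:cumulant}. The crucial structural fact is that the computation for a set $Q_j$ only subtracts quantities associated with \emph{strict supersets} of $Q_j$, so the error incurred at $Q_j$ is governed by a recursion whose depth is at most the length $q$ of the longest chain of sets. Consequently, all the exponents of the form $O(q\cdot)$ in the statement will arise simply by iterating a per-level amplification factor $q$ times.

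First I would analyze Algorithm~\ref{alg:findlinear}. Fix $Q_j$ with distinguishing set $T=\{w_1,\dots,w_L\}$ and, exactly as in the proof of Lemma~\ref{lem:findAgeneral}, use $M_{\min Q_j} = (A^{(w_1,j)}\otimes\cdots\otimes A^{(w_L,j)})\,unfold(\cum_{L+1}(S_j))$. With the Kronecker singular-value identity this gives $\sigma_{min}(M_{\min Q_j}) \ge \sigma_A^L\sigma_\kappa$, while the cumulant norm bound of \cite{bulinskii1975bounds} gives $\|M_i\|_F \le \|A\|^{L+1}(LR)^{O(L)}$. Let $\delta_l$ denote the error in the already-recovered $A^{(\cdot,l)}$ and $\cum_{L+1}(S_l)$. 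Forming $M_i$ subtracts at most $p$ superset terms, each a product of $L+1$ matrices of norm $\le\|A\|$ applied to a cumulant of norm $(LR)^{O(L)}$; a product-rule estimate then shows the effective perturbation of $M_i$ is $\epsilon_j := \epsilon + p(L+1)\|A\|^{L+1}(LR)^{O(L)}\max_{l:Q_j\subsetneq Q_l}\delta_l$. Standard pseudoinverse perturbation bounds give $\delta_j = O\!\big(\epsilon_j\,\|A\|^{L+1}(LR)^{O(L)}/(\sigma_A^L\sigma_\kappa)^2\big)$, i.e. a recursion $\delta_j \le \alpha\epsilon + \beta\max_{l:Q_j\subsetneq Q_l}\delta_l$ with $\alpha,\beta \le (pLR\|A\|/\sigma_A\sigma_\kappa)^{O(L)}$; unrolling it along a chain of length $\le q$ yields $\delta_j \le \epsilon\,(pLR\|A\|/\sigma_A\sigma_\kappa)^{O(qL)}$, the claimed accuracy for the $A^{(i,j)}$'s.

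Next I would repeat the argument for Algorithm~\ref{alg:cumulant}. There, the cross-cumulant $\cum_t(U_{w_1},\dots,U_{w_t})$ is first estimated from $Z$ samples (each $U_{w_m}$ has radius at most $p\|A\|R$, so by the $k$-statistics variance bound \cite{rose2002mathematical} this estimate is within $(p\|A\|Rt)^{O(t)}/\sqrt Z$), then multilinearly transformed by the $t$ estimated inverses $(\hat A^{(w_m,j)})^{-1}$, and then the $\le p$ superset cumulant terms are subtracted. Applying $t$ inverses amplifies the sampling error by $\sigma_A^{-O(t)}$, amplifies the error $\delta_A$ in the $A$'s by an extra $t\|A\|^t(tR)^{O(t)}\sigma_A^{-O(t)}$ (the derivative of matrix inversion), and the superset subtraction amplifies $\max_l\delta_l$ by $p(\|A\|/\sigma_A)^{O(t)}(tR)^{O(t)}$. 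Hence the per-level amplification is $(pLR\|A\|/\sigma_A\sigma_\kappa)^{O(L+t)}$; unrolling over the depth-$\le q$ recursion, and feeding in $\delta_A \le \epsilon(pLR\|A\|/\sigma_A\sigma_\kappa)^{O(qL)}$ from the first part, gives accuracy $\epsilon\,(pLR\|A\|/\sigma_A\sigma_\kappa)^{O(qL+qt)}$ for every $\cum_t(S_j)$, in particular for $S_1$. Finally, taking $\epsilon = O((p\|A\|Rt)^{O(t)}/\sqrt Z)$ for the empirical cumulants of orders up to $t$ and requiring the propagated error to be $\le\eta$ yields $Z = \Omega\!\big((pLR\|A\|/\sigma_A\sigma_\kappa)^{O(qL+qt)}/\eta^2\big)$.

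The main obstacle is the error bookkeeping in this recursion: one has to verify that the condition number $(\sigma_A^L\sigma_\kappa)^{-1}$ of the $L$-fold Kronecker-structured matrices $M_{\min Q_j}$, together with the $O(t)$ repeated applications of the $A^{(i,j)}$'s and their inverses and the $p$-term superset subtractions, compound only geometrically across the $\le q$ nested iterations, so that $q$ enters the exponent linearly ($O(qL+qt)$) rather than blowing up. Once the per-level amplification factor is isolated, everything else is a routine adaptation of the contrastive-case perturbation bounds.
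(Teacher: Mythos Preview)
Your proposal is correct and follows essentially the same approach as the paper: an induction on the depth $q_j$ of $Q_j$ in the chain ordering, using the Kronecker identity $\sigma_{\min}(M_{\min Q_j})\ge\sigma_A^L\sigma_\kappa$, the cumulant norm bound $\|M_i\|_F\le\|A\|^{L+1}(LR)^{O(L)}$, standard pseudoinverse perturbation, and the observation that at most $p$ superset terms are subtracted at each level. Your formulation as a per-level amplification factor $\beta\le(pLR\|A\|/\sigma_A\sigma_\kappa)^{O(L)}$ (resp.\ $O(L+t)$) that gets iterated at most $q$ times is exactly the content of the paper's recursion $\epsilon_{A,q_j}=O(\epsilon_{\kappa,q_j-1}\,p\|A\|^{2L+2}(LR)^{L+1}/\sigma_A^{2L}\sigma_\kappa^2)$, just packaged more cleanly; in fact the paper leaves the analysis of Algorithm~\ref{alg:cumulant} to ``very similar induction,'' so your outline of that part is more explicit than the paper's own.
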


\begin{proof}
We prove this by induction. For each variable $S_j$, let depth $q_j$ be the length of the longest chain such that $Q_j \subset Q_{j_1} \subset \cdots \subset Q_{q_{j}-1}$. We shall prove that the $A^{(i,j)}$'s are recovered with accuracy $\epsilon_{A,q_j} = O(\epsilon (p\|A\|^{2L+2}(LR)^{t+1}/\sigma_A^{2L} \sigma_\kappa^2)^{q_j-1}\|A\|^{t+1} (tR)^{L+1}/\sigma_A^{2L} \sigma_\kappa^2 )$ and the $L+1$-th cumulant is recovered with accuracy $\epsilon_{\kappa,q_j} = O(L\epsilon_{A,q_j}\|M_{\min Q_j}\|_F/\sigma_A^2)$.

First we show the base case, when $q_j = 1$ and therefore there is no other set that contains this set. In this case, $A^{(i,j)}$ is just equal to $(M_{\min Q_j}^\dag M_i)^\top$ where the $M$'s are the unfoldings of $L+1$-th order cross-cumulants (so we have them with accuracy $\epsilon$). By standard matrix perturbation bounds we know the error is bounded by $\epsilon\|M_i\|_F/\sigma_{min}(M_{\min Q_j})^2$ and we just need to bound the smallest singular value and Frobenius norm for the $M$'s. For $M_{\min Q_j}$, we know it is equal to a linear transformation of the unfolding of $\cum_{L+1}(S_j)$, therefore $\sigma_{min}(M_{\min Q_j}) \ge \sigma_A^L \sigma_c$. Similarly we have $\|M_i\|_F \le O(\|A\|^{L+1} (LR)^{L+1})$. Therefore $\epsilon_{A,1} = O(\epsilon \|A\|^{L+1} (LR)^{L+1}/\sigma_A^{2L} \sigma_\kappa^2)$. When we compute the $L+1$-th order cumulant, the dominating term is applying the inverses of the $A$ matrices we estimated, and we know $\epsilon_{\kappa,1} = O(L\epsilon_{A,1}\|M_{\min Q_j}\|_F/\sigma_A^2) = O(\epsilon L \|A\|^{2L+1} (LR)^{2L+2}/\sigma_A^{2L+2} \sigma_\kappa^2)$.

Suppose we have shown this for all of $Q_j$'s with small depth $q_j \le u$. For a set $Q_j$ with $q_j = u+1$, when we compute the matrices $M$ we need to subtract the cumulants of the previously computed variables. The number of such variables is at most $p$, and each variable has an additional error of $O(\epsilon_{\kappa,q_j-1} \|A\|^{L+1} + L\epsilon_{A,q_j-1} \|A\|^L (LR)^{L+1}) \le O(\epsilon_{\kappa,q_j-1} \|A\|^{L+1})$. This ($O(\epsilon_{\kappa,q_j-1} p\|A\|^{L+1})$) is our new error in estimating the cumulants. Therefore, by the same argument we have $\epsilon_{A,q_j} =  O(\epsilon_{\kappa,q_j-1} p \|A\|^{2L+2} (LR)^{L+1}/\sigma_A^{2L} \sigma_\kappa^2) = O(\epsilon (p\|A\|^{2L+2}(LR)^{L+1}/\sigma_A^{2L} \sigma_\kappa^2)^{q_j-1}\|A\|^{L+1} (LR)^{L+1}/\sigma_A^{2L} \sigma_\kappa^2 )$

The rest of the proof follows from very similar induction on Algorithm~\ref{alg:cumulant}.
\end{proof}
\section{Details for Section~\ref{sec:apply}}

In this section we prove Lemma~\ref{lem:approxgrad}, which shows for a strongly convex function, given a biased estimator for the gradient we can still hope to get close to its optimal solution.

\begin{lemma} [Lemma~\ref{lem:approxgrad} restated]
Suppose the negative log-likelihood function $F(\theta) = -\E[\log \mathcal{H}(\theta,S_1)]$ is $\mu$-strongly convex and $H$-smooth, given an estimator $G(\theta)$ for the gradient such that $\|G(\theta) - \nabla F(\theta)\| \le \epsilon$, gradient descent using $G(\theta)$ with step size $\frac{1}{2H}$ converges to a solution $\theta$ such that $\|\theta - \theta_*\|^2 \le \frac{8\epsilon^2}{\mu^2}$.
\end{lemma}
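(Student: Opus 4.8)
The plan is to run the standard analysis of gradient descent on a strongly convex, smooth function, but carry the gradient error $\epsilon$ through each inequality and see where it lands at the fixed point. Write the update as $\theta^{(t+1)} = \theta^{(t)} - \frac{1}{2H} G(\theta^{(t)})$, and set $g^{(t)} = \nabla F(\theta^{(t)})$, so that $G(\theta^{(t)}) = g^{(t)} + e^{(t)}$ with $\|e^{(t)}\| \le \epsilon$. The first step is to expand $\|\theta^{(t+1)} - \theta_*\|^2$ using this update. I would then bound the cross term $\langle \theta^{(t)} - \theta_*, g^{(t)}\rangle$ from below using $\mu$-strong convexity (which gives $\langle \theta^{(t)} - \theta_*, g^{(t)}\rangle \ge \mu \|\theta^{(t)}-\theta_*\|^2$, since $\nabla F(\theta_*)=0$, or more precisely the standard strong-convexity/smoothness co-coercivity bound), and bound the $\|G(\theta^{(t)})\|^2$ term using $H$-smoothness together with $\|e^{(t)}\|\le\epsilon$.

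The key step is to arrive at a one-step contraction of the form
$$
\|\theta^{(t+1)} - \theta_*\| \le \rho\,\|\theta^{(t)} - \theta_*\| + c\,\epsilon
$$
for some contraction factor $\rho < 1$ determined by $\mu$, $H$ and the step size $\tfrac{1}{2H}$, and some constant $c$. Iterating this recursion, the first term decays geometrically to $0$ and the second sums to a geometric series $c\epsilon/(1-\rho)$, so in the limit $\|\theta^{(t)} - \theta_*\| \to c\epsilon/(1-\rho)$. The final step is to plug in the explicit constants and check that $c^2/(1-\rho)^2 \le 8/\mu^2$, i.e. that the limiting squared distance is at most $\tfrac{8\epsilon^2}{\mu^2}$ as claimed; this is where the specific choice of step size $\tfrac{1}{2H}$ and the factor $8$ get pinned down (the slack in $8$ should comfortably absorb the crude bounds, e.g. using $\mu \le H$).

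The main obstacle is bookkeeping rather than conceptual: getting a clean contraction factor $\rho$ out of the perturbed update requires being a little careful about which strong-convexity inequality to use (the plain lower bound $\langle \theta-\theta_*, g\rangle \ge \mu\|\theta-\theta_*\|^2$ versus the sharper co-coercive bound involving both $\mu$ and $H$), and about whether to absorb $\|e^{(t)}\|$ into the contraction step via a triangle inequality on $\theta^{(t+1)}$ directly or via the squared-norm expansion. Either route works; I would use the triangle-inequality route since it most directly yields the recursion above and makes the geometric-series limit transparent. One should also note that "converges to a solution $\theta$" is meant in the limiting sense, so the statement is really a bound on $\limsup_t \|\theta^{(t)} - \theta_*\|^2$, and I would phrase it that way.
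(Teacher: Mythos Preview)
Your proposal is correct and follows essentially the same approach as the paper. The only differences are bookkeeping choices you already flagged: the paper works with squared norms throughout (arriving at the recursion $\|\theta^{(t+1)}-\theta_*\|^2 \le (1-\tfrac{\mu}{4H})\|\theta^{(t)}-\theta_*\|^2 + \tfrac{2\epsilon^2}{\mu H}$) rather than the triangle-inequality route, and it obtains the key inequality $\langle \nabla F(\theta), \theta-\theta_*\rangle \ge \tfrac{1}{2H}\|\nabla F(\theta)\|^2 + \tfrac{\mu}{2}\|\theta-\theta_*\|^2$ by combining strong convexity with the smoothness consequence $F(\theta)-F(\theta_*) \ge \tfrac{1}{2H}\|\nabla F(\theta)\|^2$, rather than invoking co-coercivity directly.
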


Before proving this lemma we first introduce basic definitions for strongly convex functions.

\begin{definition}[$\mu$-strongly convex] A function $F(\theta)$ (whose second order derivatives exist) is $\mu$-strongly convex if for any two points $\theta, \tau$ we have
$$
F(\theta) \ge F(\tau) + \langle \nabla F(\tau), \theta - \tau\rangle + \frac{\mu}{2} \|\theta - \tau\|^2.
$$
\end{definition}

\begin{definition}[$H$-smooth] A function $F(\theta)$ (whose second order derivatives exist) is $H$-smooth if for any two points $\theta, \tau$ we have
$$
F(\theta) \le F(\tau) + \langle \nabla F(\tau), \theta - \tau\rangle + \frac{H}{2} \|\theta - \tau\|^2.
$$
\end{definition}

The proof of Lemma~\ref{lem:approxgrad} mostly follows from the approximate gradient framework in \cite{dictionary}. For completeness we also give the proof here.

\begin{proof}
Let $\theta^*$ be the optimal point. First, by $\mu$-strongly convexity we know
$$
\langle \nabla F(\theta), \theta - \theta^* \rangle \ge F(\theta) - F(\theta^*) + \frac{\mu}{2}\|\theta - \theta_*\|^2.
$$
On the other hand, by $H$-smoothness we know
$$
F(\theta^*) \le \min_\eta F(\theta-\eta \nabla F(\theta)) \le \min_\eta F(x) - \eta \|\nabla F(\theta)\|^2 + \frac{H\eta^2}{2} \|\nabla F(\theta)\|^2 = F(x) - \frac{1}{2H}\|\nabla F(\theta)\|^2.
$$
Therefore
\begin{equation}
\langle \nabla F(\theta), \theta - \theta^* \rangle \ge \frac{1}{2H} \|\nabla F(\theta)\|^2 + \frac{\mu}{2}\|\theta - \theta_*\|^2. \label{eq:strongconvex}
\end{equation}

Now we prove even when the gradient $G(\theta)$ is only an approximation, the above equation still holds approximately.

\begin{claim}
$$
\langle G(\theta), \theta - \theta^*\rangle \ge \frac{1}{4H} \|G(\theta)\|^2 + \frac{\mu}{4} \|\theta - \theta_*\|^2 - 2\epsilon^2/\mu.
$$
\end{claim}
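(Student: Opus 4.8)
The plan is to derive the claimed inequality from the exact-gradient inequality~(\ref{eq:strongconvex}) by treating $G(\theta)$ as a perturbation of $\nabla F(\theta)$ of size at most $\epsilon$, and then to absorb all the error terms using Young's inequality (i.e. $2ab \le \lambda a^2 + b^2/\lambda$). First I would write $G(\theta) = \nabla F(\theta) + e$ with $\|e\| \le \epsilon$, and expand the left-hand side as $\langle G(\theta),\theta-\theta^*\rangle = \langle \nabla F(\theta),\theta-\theta^*\rangle + \langle e,\theta-\theta^*\rangle$. Using~(\ref{eq:strongconvex}) for the first summand, the task reduces to showing that the error term $\langle e,\theta-\theta^*\rangle$, together with the gap between $\frac{1}{2H}\|\nabla F(\theta)\|^2$ and $\frac{1}{4H}\|G(\theta)\|^2$, can be controlled by giving up $\frac{\mu}{4}\|\theta-\theta_*\|^2$ from the strong-convexity term and subtracting the additive slack $2\epsilon^2/\mu$.

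The two key estimates are as follows. For the linear error term, by Cauchy--Schwarz and Young's inequality with parameter $\mu/4$, $|\langle e,\theta-\theta^*\rangle| \le \epsilon\|\theta-\theta^*\| \le \frac{\mu}{4}\|\theta-\theta^*\|^2 + \frac{\epsilon^2}{\mu}$, which accounts for one quarter of the strong-convexity term and half of the additive slack. For the quadratic term, I would compare $\|\nabla F(\theta)\|^2$ with $\|G(\theta)\|^2$: since $\|\nabla F(\theta)\| \ge \|G(\theta)\| - \epsilon$, one gets $\|\nabla F(\theta)\|^2 \ge \frac{1}{2}\|G(\theta)\|^2 - \epsilon^2$ (using $(a-b)^2 \ge \frac12 a^2 - b^2$), so $\frac{1}{2H}\|\nabla F(\theta)\|^2 \ge \frac{1}{4H}\|G(\theta)\|^2 - \frac{\epsilon^2}{2H}$. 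Here one should note that $H \ge \mu$, so $\frac{\epsilon^2}{2H} \le \frac{\epsilon^2}{2\mu}$, which together with the $\frac{\epsilon^2}{\mu}$ from the linear term sums to at most $\frac{3\epsilon^2}{2\mu} \le \frac{2\epsilon^2}{\mu}$. Combining these, $\langle G(\theta),\theta-\theta^*\rangle \ge \frac{1}{4H}\|G(\theta)\|^2 + \frac{\mu}{2}\|\theta-\theta_*\|^2 - \frac{\mu}{4}\|\theta-\theta_*\|^2 - \frac{\epsilon^2}{\mu} - \frac{\epsilon^2}{2\mu}$, which gives the claim.

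I do not expect a genuine obstacle here; the only place that requires a little care is the bookkeeping of constants — making sure the two quarters taken from $\frac{\mu}{2}\|\theta-\theta_*\|^2$ and the two contributions to the additive slack land exactly within the stated $\frac{\mu}{4}\|\theta-\theta_*\|^2$ and $2\epsilon^2/\mu$ budget, and in particular invoking $H \ge \mu$ (which holds for any function that is simultaneously $\mu$-strongly convex and $H$-smooth) to bound $1/H$ by $1/\mu$. Once the claim is established, the lemma itself follows by the standard descent argument: with step size $\frac{1}{2H}$ one shows $\|\theta^{(t+1)}-\theta_*\|^2 \le (1-\frac{\mu}{4H})\|\theta^{(t)}-\theta_*\|^2 + \frac{\epsilon^2}{H\mu} + \ldots$, whose fixed point is of order $\frac{\epsilon^2}{\mu^2}$; but the present statement is only the claim, so the argument above suffices.
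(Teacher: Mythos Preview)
Your proposal is correct and follows essentially the same route as the paper: split $\langle G(\theta),\theta-\theta^*\rangle$ into the exact-gradient part plus an error term, apply Young's inequality $\epsilon\|\theta-\theta^*\|\le \frac{\mu}{4}\|\theta-\theta^*\|^2+\frac{\epsilon^2}{\mu}$ to the latter, invoke~(\ref{eq:strongconvex}), and then trade $\frac{1}{2H}\|\nabla F(\theta)\|^2$ for $\frac{1}{4H}\|G(\theta)\|^2$ via $\|G(\theta)\|^2\le 2\epsilon^2+2\|\nabla F(\theta)\|^2$, finally using $H\ge \mu$ to absorb $\frac{\epsilon^2}{2H}$ into the $\frac{2\epsilon^2}{\mu}$ slack. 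The only cosmetic difference is that the paper states the quadratic comparison as $\|G(\theta)\|^2\le 2\|G(\theta)-\nabla F(\theta)\|^2+2\|\nabla F(\theta)\|^2$ rather than via $(a-b)^2\ge \tfrac12 a^2-b^2$; note that your phrasing ``since $\|\nabla F(\theta)\|\ge\|G(\theta)\|-\epsilon$, square both sides'' tacitly assumes $\|G(\theta)\|\ge\epsilon$, whereas the paper's form avoids this case split (the inequality you need is still true when $\|G(\theta)\|<\epsilon$, so this is not a real gap).
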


\begin{proof}
We know
\begin{align*}
\langle G(\theta), \theta - \theta^*\rangle & = \langle \nabla F(\theta),\theta-\theta_*\rangle + \langle G(\theta) - \nabla F(\theta),\theta-\theta_*\rangle\\
& \ge \langle \nabla F(\theta),\theta-\theta_*\rangle - \epsilon \|\theta-\theta_*\| 
\\ & \ge \langle \nabla F(\theta),\theta-\theta_*\rangle - \frac{\mu}{4}\|\theta-\theta_*\|^2 - \frac{\epsilon^2}{\mu}.
\end{align*}

Also, $\|G(\theta)\|^2 \le 2\|G(\theta) - \nabla F(\theta)\|^2 + 2\|\nabla F(\theta)\|^2$. Using these two inequalities in Equation (\ref{eq:strongconvex}), we get
\begin{align*}
\langle G(\theta), \theta - \theta^*\rangle & \ge \langle \nabla F(\theta),\theta-\theta_*\rangle - \frac{\mu}{4}\|\theta-\theta_*\|^2 - \frac{\epsilon^2}{\mu} \\
& \ge \frac{1}{2H}\|\nabla F(\theta)\|^2 + \frac{\mu}{4}\|\theta-\theta_*\|^2 - \frac{\epsilon^2}{\mu}\\
& \ge \frac{1}{4H}\|\nabla F(\theta)\|^2 + \frac{\mu}{4}\|\theta-\theta_*\|^2 - \frac{\epsilon^2}{2H} - \frac{\epsilon^2}{\mu} \\
& \ge \frac{1}{4H}\|\nabla F(\theta)\|^2 + \frac{\mu}{4}\|\theta-\theta_*\|^2 -  \frac{2\epsilon^2}{\mu}
\end{align*}
\end{proof}

The above Claim essentially matches the $(\alpha,\beta,\epsilon)$-approximate condition in \cite{dictionary}. Now suppose the update rule is
$$
\theta^{(t+1)} = \theta^{(t)} - \frac{1}{2H} G(\theta^{(t)}).
$$
We can then prove convergence result:
\begin{claim}
$$
\|\theta^{(t)} - \theta^*\|^2 \le (1-\frac{\mu}{4H})^t \|\theta^{(0)} - \theta_*\|^2 + \frac{8\epsilon^2}{\mu^2}.
$$
\end{claim}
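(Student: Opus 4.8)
The plan is to establish a one-step contraction inequality from the Claim proved just above, and then unroll it. First I would expand the squared distance after a single step. Writing $\theta^{+} = \theta - \frac{1}{2H} G(\theta)$, the identity
$$
\|\theta^{+} - \theta^*\|^2 = \|\theta - \theta^*\|^2 - \frac{1}{H}\langle G(\theta), \theta - \theta^*\rangle + \frac{1}{4H^2}\|G(\theta)\|^2
$$
is pure algebra and requires no hypotheses.

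Next I would substitute the bound from the Claim, namely $\langle G(\theta), \theta - \theta^*\rangle \ge \frac{1}{4H}\|G(\theta)\|^2 + \frac{\mu}{4}\|\theta - \theta^*\|^2 - \frac{2\epsilon^2}{\mu}$, into this identity. The two copies of $\frac{1}{4H^2}\|G(\theta)\|^2$ cancel exactly — this cancellation is precisely why the factor $\frac14$ (rather than $\frac12$) appears in the Claim and why the step size is chosen to be $\frac{1}{2H}$ — leaving the clean recursion
$$
\|\theta^{+} - \theta^*\|^2 \le \Bigl(1 - \frac{\mu}{4H}\Bigr)\|\theta - \theta^*\|^2 + \frac{2\epsilon^2}{H\mu}.
$$

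Finally I would iterate this over $t$ steps. Set $\rho = 1 - \frac{\mu}{4H}$; since any $\mu$-strongly convex and $H$-smooth function satisfies $\mu \le H$, we have $\rho \in [\tfrac34, 1)$, so the recursion is genuinely contractive. A trivial induction gives $\|\theta^{(t)} - \theta^*\|^2 \le \rho^t \|\theta^{(0)} - \theta_*\|^2 + \frac{2\epsilon^2}{H\mu}\sum_{i=0}^{t-1}\rho^i$, and bounding the geometric series by $\sum_{i\ge 0}\rho^i = \frac{1}{1-\rho} = \frac{4H}{\mu}$ yields $\frac{2\epsilon^2}{H\mu}\cdot\frac{4H}{\mu} = \frac{8\epsilon^2}{\mu^2}$, which is exactly the claimed bound. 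Letting $t \to \infty$ then recovers the statement of Lemma~\ref{lem:approxgrad}.

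I do not expect a genuine obstacle here; the proof is a standard approximate-gradient-descent argument (as the authors note, following the framework they cite). The only points demanding care are (i) verifying that the $\|G(\theta)\|^2$ terms cancel with the stated step size so that no bound on $\|G(\theta)\|$ is ever needed, and (ii) the geometric-series bookkeeping that turns the per-step error $\frac{2\epsilon^2}{H\mu}$ into the stationary error $\frac{8\epsilon^2}{\mu^2}$.
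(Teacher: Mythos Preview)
Your proposal is correct and matches the paper's proof essentially line for line: the paper also expands $\|\theta^{(t+1)}-\theta^*\|^2$, invokes the previous Claim to cancel the $\|G(\theta)\|^2$ terms and obtain the one-step recursion $\|\theta^{(t+1)}-\theta^*\|^2 \le (1-\tfrac{\mu}{4H})\|\theta^{(t)}-\theta^*\|^2 + \tfrac{2\epsilon^2}{\mu H}$, and then closes by induction. The only cosmetic difference is that the paper phrases the unrolling as a direct induction on the stated bound, whereas you write out the finite geometric series and bound it by $\tfrac{4H}{\mu}$; these are the same argument.
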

\begin{proof}
We prove this by induction. Assume this is true for step $t$ (the base case $t=0$ is trivial), then for the next step we have
\begin{align*}
\|\theta^{(t+1)} - \theta^*\|^2 & = \|\theta^{(t)} - \theta^*\|^2 - \frac{1}{H} \langle G(\theta^{(t)}), \theta^{(t)}-\theta^*\rangle + \frac{1}{4H^2} \|G(\theta^{(t)})\|^2 \\
& = \|\theta^{(t)} - \theta^*\|^2 - \frac{1}{2H} (2\langle G(\theta^{(t)}), \theta^{(t)} \theta^*\rangle- \frac{1}{2H}\|G(\theta^{(t)})\|^2 )\\
& \le \|\theta^{(t)} - \theta^*\|^2 - \frac{1}{2H} ( \frac{\mu}{2}\|\theta^{(t)} - \theta^*\|^2 - \frac{4\epsilon^2}{\mu}) \\
& \le (1-\frac{\mu}{4H}) \|\theta^{(t)} - \theta^*\|^2 + \frac{2\epsilon^2}{\mu H}.
\end{align*}

Substituting in the bound for $\|\theta^{(t)} - \theta^*\|^2$ we get the exact claim.
\end{proof}

Therefore by carefully choosing the step size gradient descent quickly converges to a nearby point (in fact similar argument works as long as the learning rate is upper bounded by $\frac{1}{2H}$). Similar arguments can be proved for stochastic gradient with a small enough step size (depending on the variance).
\end{proof}
\section{Details for Section~\ref{sec:expt}}

\paragraph{Ising model inference.}
Let $\theta \equiv \{J_{ij}\}$, the composite log-likelihood $l_{cl}$ of this Ising model can be written as  

\[
l_{cl}(\theta) = \E_S\left[\sum_{i \in \mbox{Vert}} \log \frac{P(S| \theta)}{P(S, S(i) = 1 | \theta) + P(S, S(i) = -1 | \theta)}\right].
\]
The gradient of the composite log-likelihood with respect to a particular  $J_{ij}$ is
\begin{eqnarray}
\bigtriangledown_{J_{ij}}l_{cl} &=& \E_S\left[\frac{2S(i)S(j)}{1+\mbox{exp}(2S(i)\sum_{k \in \mbox{neigh}(i)}J_{ik}S(k) )} +  \frac{2S(i)S(j)}{1 + \mbox{exp} (2S(j) \sum_{k \in \mbox{neigh}(j)J_{jk}S(k)})} \right] \nonumber \\
&\approx & \E_S\left[ 2S(i)S(j) - S(i)^2S(j)\sum_{k \in \mbox{neigh}(i)}J_{ik} S(k) - S(i)S(j)^2\sum_{k \in \mbox{neigh}(j)}J_{jk} S(k) \right] \nonumber
\end{eqnarray}
where we have used the 4-th order Taylor expansion. We then use the 2nd and 4th order cumulant tensors from $U$ and $V$ to obtain unbiased estimator of terms $\E_S[S(i)S(j)]$ and $\E_S[S(i)^2S(j)S(k)]$. This gives the approximate gradient used in SGD. In the experiments, we used batch size of 100 samples to approximate each step of the gradient. 

\paragraph{Biomarkers experiment.} In the simulation for having samples from multiple facilities, we get two views of the data, where $U = S_1+S_2$ and $V = S_2' + S_3$. Here $S_1$ represents the values for test markers, $S_3$ represents the values for control markers, and $(S_2,S_2')$ jointly represents the perturbation caused by different labs.

In our set up, we assume the samples come from two different labs, each lab has a bias on all the 20 markers (we use $(p^1,q^1), (p^2,q^2)\in \R^{10}\times \R^{10}$ to denote the biases). That is
$$
(S_2,S_2') = \left\{ \begin{array}{rl} (p^1,q^1) & \mbox{sample from lab 1} \\(p^2,q^2) & \mbox{sample from lab 2} \end{array}\right.
$$

In this case, when the vectors $p,q$ are in general positions, it is easy to see that there is a rank-2 matrix $A$ such that $S_2' = AS_2$. In particular, let $P\in \R^{10\times 2}$ be the matrix whose columns are $p^1,p^2$, $Q\in \R^{10\times 2}$ be the matrix whose columns are $q^1,q^2$, then we know $A = QP^\dag$.

Note that this does not fit directly in our framework as the distribution $S_2$ has low rank, and therefore the 4-th order cumulant cannot have full column rank. However, we can consider $S_2 = PX_2$ and $S_2' = QX_2$ (where $X_2 = (1,0)$ for samples from lab 1 and $X_2 = (0,1)$ for samples from lab 2). We show that the algorithm still makes sense in this setting.

Let $W = unfold(\cum_4(QX_2,PX_2,PX_2, X_2)) \in \R^{1000\times 2}$, by the linearity of cumulants, we know
\begin{align*}
M_1 = unfold(\cum_4(V,U,U,U)) = WP^\top, \\
M_2 = unfold(\cum_4(V,U,U,V)) = WQ^\top.
\end{align*}

In this case, $M_1$ does not have full column rank, so the usual definition of pseudo-inverse does not work. However, we can write $P = ZR$ where $Z\in \R^{10\times 2}$ is an orthonormal matrix ($Z^\top Z = I$), and $R\in \R^{2\times 2}$ and then hope to find $M_1^\dag$ such that $M_1^\dag M_1 = ZZ^\top$ (this is possible because we can let $M_1^\dag := Z(WR^\top)^\dag$).

When we use this definition of pseudo-inverse, it is easy to check that $(M_1^\dag M_2)^\top = QR^{-1} Z^\top = QP^\dag$, therefore our algorithm can still recover the correct rank-2 $A$ matrix.

\end{document}